\documentclass{article}

\usepackage{iclr2026_conference,times}

\usepackage{amsmath,amssymb,amsthm}
\usepackage{booktabs}
\usepackage{graphicx}
\usepackage{colortbl}
\usepackage{multirow}
\usepackage{algorithm}
\usepackage{algpseudocode}
\usepackage{tikz}
\usetikzlibrary{arrows.meta,positioning,shapes.misc,decorations.pathmorphing}

\newif\ifrevisions
\revisionsfalse
\definecolor{editblue}{RGB}{0,80,180}
\definecolor{linknavy}{RGB}{0,60,130}
\newcommand{\mnote}[1]{\ifrevisions{\color{editblue}#1}\else#1\fi}

\definecolor{cellbest}{RGB}{198,239,206}
\definecolor{cellworst}{RGB}{255,199,206}
\definecolor{heatpos}{RGB}{99,190,123}
\definecolor{heatneg}{RGB}{248,105,107}
\definecolor{leanbadge}{RGB}{187,214,190}

\newcommand{\leantag}{The proof is mechanized in Lean~4 (\autoref{app:certification}).}

\usepackage{hyperref}
\usepackage{url}
\hypersetup{
  colorlinks=true,
  linkcolor=linknavy,   %
  citecolor=linknavy,   %
  urlcolor=linknavy,
  breaklinks=true,
}

\newtheorem{theorem}{Theorem}
\newtheorem{proposition}{Proposition}

\newtheorem{corollary}{Corollary}
\theoremstyle{definition}
\newtheorem{definition}{Definition}
\newtheorem{assumption}{Assumption}
\theoremstyle{remark}
\newtheorem{remark}{Remark}

\newcommand{\E}{\mathbb{E}}
\newcommand{\Prob}{\mathbb{P}}

\newcommand{\Otil}{\widetilde{O}}
\newcommand{\eps}{\varepsilon}
\newcommand{\spread}{\mathrm{spread}}
\newcommand{\bias}{\mathrm{bias}}                %
\newcommand{\leaves}[1]{\mathcal{L}(#1)}     %
\newcommand{\val}{f}                          %
\newcommand{\muhat}{\widehat{\mu}}
\newcommand{\Hedge}{H_{\mathrm{edge}}}
\newcommand{\Hblind}{H_{\mathrm{blind}}}
\newcommand{\cp}{c_{p}}                        %
\newcommand{\cl}{c_{\ell}}                     %
\newcommand{\nodim}{d}                         %

\newcommand{\pending}[1][run]{\textbf{[pending: #1]}}
\newcommand{\figorpending}[3][\linewidth]{%
  \IfFileExists{#2}{\includegraphics[width=#1]{#2}}%
  {\fbox{\parbox[c][0.28\textheight][c]{#1}{\centering\pending[figure from \texttt{#3}]}}}}
\newcommand{\tableorpending}[2]{%
  \IfFileExists{#1}{\input{#1}}{\pending[table from \texttt{#2}]}}

\title{Canopy: Exploiting Piecewise Smooth Tree Priors for Multi-Fidelity Bandits}

\iclrfinalcopy
\author{%
  Michael Jerge \\
  Amazon \\
  \texttt{mjerge@amazon.com} %
  \And
  Suman Jana \\
  Columbia University \\
  \texttt{sj2754@columbia.edu} %
}

\begin{document}
\maketitle
\lhead{Preprint. Under review.}

\begin{abstract}

Many LLM inference problems, including model routing, prefix-cache
management, prompt trimming, and test-time search, can be viewed as
optimization over a tree. This structure arises naturally from autoregressive generation: every prefix defines a node, and its continuations form a subtree below it. Internal nodes of the tree provide cheap but biased estimates of a region’s value, while leaf evaluations are expensive
but accurate. Hierarchical bandit methods can exploit this structure, but typically require a specific smoothness schedule to be specified in
advance, even though real objectives are often only piecewise smooth and their optima may lie near sharp boundaries. We introduce CANOPY, a multi-fidelity tree bandit that learns where the smoothness prior is valid rather than assuming it globally. CANOPY uses cheap random-path probes to construct an online certificate of local aggregation bias, then directs expensive leaf evaluations toward cells where the certificate detects a smoothness violation. We prove fixed-budget and regret guarantees whose additional cost is additive
in the number of discontinuities, recovering the smooth-tree rate when no violations are present and approaching structure-blind search as
violations become dense. Across routing, top-(k) identification, test-time search, caching, and prompt trimming, CANOPY consistently
improves matched-budget performance, including $2.9\times$ higher top-$10$ recall on a $1000$-model pool, $1.6\times$ more SWE-bench Verified issues resolved than best-of-(N), and $3.6\times$ lower median time-to-first-token
with prefix caching.

\end{abstract}

\section{Introduction}
\label{sec:intro}
Modern LLM applications spend an increasing fraction of their compute at inference time~\citep{welleck2024metageneration,snell2024scaling,brown2024monkeys}, and the algorithms that use this compute often revisit shared partial computations. For example, search procedures expand multiple traces from a common prefix~\citep{yao2023tot}, serving stacks reuse cached prefixes across requests~\citep{kwon2023vllm,zheng2024sglang}, and prompt-trimming methods consider progressively compressed versions of a long prompt~\citep{jiang2023llmlingua}. In each setting, the system must choose among more candidates than it can afford to evaluate exhaustively, since evaluating a candidate may require a generation, a test run, or a served request. These candidates admit a natural hierarchical organization: candidates that share more of their computation tend to behave more similarly, except at a small number of sharp boundaries.

This hierarchy induces a multi-fidelity feedback model (\hyperref[fig:overview]{Figure~\ref*{fig:overview}a}). An internal-node probe returns the subtree average, giving a cheap but biased proxy for the best leaf beneath it, while a leaf evaluation is expensive but unbiased. The key quantity is therefore the \emph{aggregation bias}: how far the best leaf in a cell can exceed its mean. Classical hierarchical bandits control this bias using a smoothness schedule specified in advance~\citep{bubeck2011xarmed,munos2011soo,azar2014hct}. But the right schedule is difficult to know: assuming too much smoothness can prune promising regions, while being too conservative forfeits the savings from hierarchical search. Moreover, real objectives may be only piecewise smooth, with a small number of sharp boundaries where an innocuous subtree average hides an unusually valuable leaf. This raises our central question: can the learner use cheap observations to determine where the smoothness assumption is actually valid?

\begin{figure}[t]
\centering
\resizebox{\textwidth}{!}{%
\begin{tikzpicture}[
  inode/.style={circle, draw=black!60, fill=black!10, minimum size=10.5mm, inner sep=0pt,
    font=\LARGE},
  probe/.style={circle, draw=blue!70!black, fill=blue!20, minimum size=10.5mm, inner sep=0pt,
    thick, font=\LARGE},
  leaf/.style={rectangle, draw=black!60, fill=black!5, minimum size=7.5mm, inner sep=0pt,
    font=\LARGE},
  leafeval/.style={rectangle, draw=orange!80!black, fill=orange!30, minimum size=7.5mm,
    inner sep=0pt, thick, font=\LARGE},
  gnode/.style={circle, draw=black!30, fill=black!5, minimum size=10.5mm, inner sep=0pt,
    font=\LARGE, text=black!40},
  gleaf/.style={rectangle, draw=black!30, fill=black!3, minimum size=7.5mm, inner sep=0pt,
    font=\LARGE, text=black!40},
  edge/.style={black!45},
  gedge/.style={black!20},
  lbl/.style={font=\LARGE\bfseries},
  slbl/.style={font=\LARGE}]

\begin{scope}[xshift=0cm]
  \node[lbl, anchor=west] at (-0.6, 5.1) {(a) Multi-fidelity tree};
  \node[inode] (r) at (3.5, 4) {.53};
  \node[probe] (a1) at (1.5, 2.7) {.60};
  \node[inode] (a2) at (5.5, 2.7) {.45};
  \foreach \i/\x/\v in {1/0.5/.55, 2/2.2/.65, 4/4.2/.4, 5/5.55/.55, 6/6.9/.3}{
    \node[inode, minimum size=9.5mm] (b\i) at (\x, 1.5) {\v};}
  \foreach \i/\x/\v in {1/0.0/.5, 2/0.9/.6, 3/1.8/.6, 4/2.7/.7, 5/4.2/.4, 7/6.0/.2,
                        8/6.9/.3}{
    \node[leaf] (l\i) at (\x, 0.3) {\v};}
  \node[leafeval] (l6) at (5.1, 0.3) {.9};
  \draw[edge] (r)--(a1); \draw[edge] (r)--(a2);
  \draw[edge] (a1)--(b1); \draw[edge] (a1)--(b2);
  \draw[edge] (a2)--(b4); \draw[edge] (a2)--(b5); \draw[edge] (a2)--(b6);
  \draw[edge] (b1)--(l1); \draw[edge] (b1)--(l2); \draw[edge] (b2)--(l3);
  \draw[edge] (b2)--(l4); \draw[edge] (b4)--(l5); \draw[edge] (b5)--(l6);
  \draw[edge] (b5)--(l7); \draw[edge] (b6)--(l8);
  \draw[dashed, red!70!black, rounded corners=2pt] (4.6, -0.2) rectangle (6.5, 0.8);
  \draw[red!70!black, very thick, decorate,
        decoration={zigzag, segment length=2.2mm, amplitude=0.7mm}]
    (5.55, -0.15) -- (5.55, 0.75);
  \node[slbl, blue!70!black, align=center] at (0.4, 3.85)
    {cheap probe $\cp$\\(biased average)};
  \node[slbl, orange!80!black, align=center] at (2.6, -0.95)
    {expensive eval $\cl$\\(unbiased)};
  \node[slbl, red!70!black, anchor=west, align=left] at (6.4, -0.95)
    {Lipschitz\\boundary};
  \draw[red!70!black, thin] (6.45, -0.6) -- (5.65, -0.2);
\end{scope}

\begin{scope}[xshift=10.5cm]
  \node[lbl, anchor=west] at (-0.6, 5.1) {(b) Data-driven certificate};
  \foreach \x/\h/\c in {0.6/0.22/cellbest, 3.1/0.22/cellbest, 5.6/1.3/cellworst}{
    \draw[rounded corners=2pt, fill=\c, draw=black!40] (\x-0.6, 0) rectangle (\x+0.6, 1.7);
    \draw[fill=black!55, draw=none] (\x-0.15, 0.2) rectangle (\x+0.15, 0.2+\h);
  }
  \draw[dashed, red!70!black, thick] (-0.4, 0.62) -- (6.6, 0.62);
  \node[slbl, red!70!black, anchor=west] at (6.7, 0.62) {floor $L\rho^{\ell}$};
  \node[slbl] at (0.6, 2.1) {$\{.5,\,.6\}$};
  \node[slbl] at (3.1, 2.1) {$\{.6,\,.7\}$};
  \node[slbl, red!70!black] at (5.6, 2.1) {$\{.9,\,.2\}$};
  \node[slbl] at (1.85, 2.85) {certified smooth};
  \node[slbl, red!70!black] at (5.6, 2.85) {flagged};
  \node[slbl, align=center] at (3.1, -0.95)
    {bar $=$ within-cell spread\\from cheap random-path probes};
\end{scope}

\begin{scope}[xshift=21cm]
  \node[lbl, anchor=west] at (-0.6, 5.1) {(c) Discontinuity-guided sampling};
  \node[inode] (r2) at (3.5, 4) {.53};
  \node[gnode] (c1) at (1.5, 2.7) {.60};
  \node[inode] (c2) at (5.5, 2.7) {.45};
  \foreach \i/\x/\v in {1/0.5/.55, 2/2.2/.65}{
    \node[gnode, minimum size=9.5mm] (gb\i) at (\x, 1.5) {\v};}
  \foreach \i/\x/\v in {1/0.0/.5, 2/0.9/.6, 3/1.8/.6, 4/2.7/.7}{
    \node[gleaf] (gl\i) at (\x, 0.3) {\v};}
  \foreach \i/\x/\v in {4/4.2/.4, 5/5.55/.55, 6/6.9/.3}{
    \node[inode, minimum size=9.5mm] (d\i) at (\x, 1.5) {\v};}
  \node[leaf] (m5) at (4.2, 0.3) {.4};
  \node[leaf] (m8) at (6.9, 0.3) {.3};
  \draw[rounded corners=2pt, fill=cellworst, draw=black!40] (4.55, -0.2) rectangle (6.55, 1.0);
  \draw[red!70!black, very thick, decorate,
        decoration={zigzag, segment length=2.2mm, amplitude=0.7mm}]
    (5.55, -0.15) -- (5.55, 0.75);
  \node[leafeval] (m6) at (5.1, 0.3) {.9};
  \node[leafeval] (m7) at (6.0, 0.3) {.2};
  \draw[gedge] (r2)--(c1); \draw[edge] (r2)--(c2);
  \draw[gedge] (c1)--(gb1); \draw[gedge] (c1)--(gb2);
  \draw[gedge] (gb1)--(gl1); \draw[gedge] (gb1)--(gl2);
  \draw[gedge] (gb2)--(gl3); \draw[gedge] (gb2)--(gl4);
  \draw[edge] (c2)--(d4); \draw[edge] (c2)--(d5); \draw[edge] (c2)--(d6);
  \draw[edge] (d4)--(m5); \draw[edge] (d6)--(m8);
  \draw[edge] (d5)--(m6); \draw[edge] (d5)--(m7);
  \foreach \x in {5.1, 6.0}{
    \draw[-{Stealth[length=2mm]}, orange!80!black, thick] (\x, 1.35) -- (\x, 0.75);}
  \node[slbl, black!45, align=center] at (0.4, 3.85)
    {certified smooth:\\pruned by probes};
  \node[slbl, align=center] at (3.5, -0.95)
    {dense evals on the flagged cell\\recover $x^\star{=}.9$};
\end{scope}
\end{tikzpicture}}
\caption{\textbf{Method overview.} (a)~Leaves show the unknown values, and the red line
marks the Lipschitz boundary splitting the cell.
(b)~The certificate measures each cell's within-cell spread from cheap probes against the Lipschitz floor.
(c)~On the same tree, the search prunes the certified-smooth subtree.}
\label{fig:overview}
\end{figure}

\paragraph{Contributions.}
We study a multi-fidelity bandit over a complete tree without access to the smoothness schedule that standard hierarchical bandits assume. Internal nodes provide cheap subtree averages under a cost budget, while leaves provide expensive unbiased evaluations. We use the standard hierarchical descent of $\mathcal{X}$-armed bandits~\citep{bubeck2011xarmed,azar2014hct,munos2011soo,grill2015poo}, but, to our knowledge, are the first to certify the required smoothness information directly from data rather than assume it or search over candidate schedules. We further show that once a smoothness violation is encountered, targeted leaf-level sampling is unavoidable, up to logarithmic factors, for identifying the best leaf (\autoref{prop:lower}).

We make three key contributions. First, we develop an online, local-Lipschitz certificate of aggregation bias (\autoref{sec:smoothness}). Second, we build a discontinuity-guided adaptive sampling technique that spends expensive evaluations only in regions where the certificate fails (\autoref{sec:theory}). Third, we combine the two into a cost-budgeted framework (\autoref{sec:setup}) that casts model routing, prefix caching, and test-time search as instances of the same multi-fidelity tree optimization problem.

\paragraph{Experimental overview.}
Across twelve open-source benchmarks, our method improves performance at matched cost or compute over others (\autoref{tab:summary}). 
On RouterEval's $1000$-model pool, the tree bandit reaches $2.9\times$ the top-$10$ recall of the strongest structure-blind baseline 
and is better at every budget. Value-guided search beats best-of-N by $+0.087$ on MATH,
$+0.111$ on GPQA-Diamond, and $+0.123$ on SWE-bench Verified, with the strict budget cells
of the model$\times$budget sweep reaching $1.8$--$2.2\times$. In a serving-based setting, the regional router beats the flat learner in $4$ of $5$
independent $\tau$-bench replicates, and reaches near-oracle quality at the lowest cost on
live MMLU. On the Mooncake production trace, the adaptive cache reuses $78\%$ more blocks per
request than the engine-default LRU after a popularity shift. On a vLLM server, prefix
caching cuts median time-to-first-token $3.6\times$, and adaptive eviction saves $83\%$ more
TTFT than LRU post-shift. The experiments also identify precisely when these gains are available and measure every boundary condition
(\autoref{sec:experiments}).

\section{Background}
\label{sec:background}

\paragraph{Hierarchical and $\mathcal{X}$-armed bandits.}
Optimizing a noisy function over a hierarchical partition of its domain is the classical
$\mathcal{X}$-armed bandit problem~\citep{bubeck2011xarmed,munos2014optimistic}.
Optimistic tree-search methods differ primarily in how they control the gap between a
cell's observed value and its best descendant. HOO~\citep{bubeck2011xarmed} uses a
known smoothness schedule, while SOO/StoSOO~\citep{munos2011soo,valko2013stosoo}
avoid specifying one explicitly and POO~\citep{grill2015poo} searches over a family of
candidate schedules. Adaptation to unknown smoothness is impossible from leaf
evaluations alone~\citep{locatelli2018adaptivity}. Our setting provides additional
feedback: a random-path probe samples the distribution of leaf values within a cell,
allowing us to certify the corresponding maximum-minus-mean aggregation bias directly
from data (\autoref{sec:smoothness}).

\paragraph{Multi-fidelity optimization and best-arm identification.}
Multi-fidelity optimization combines cheap biased observations with expensive accurate
ones~\citep{kandasamy2016multifidelity}. Our setting instantiates this structure on a
tree: internal-node probes return cheap subtree averages, while leaf evaluations provide
expensive unbiased observations. Since a subtree average can underestimate its best
leaf, the relevant fidelity bias is one-sided and is exactly the aggregation bias studied
in \autoref{sec:smoothness}. Fixed-budget best-arm identification provides the
corresponding leaf-level problem, where a problem-dependent hardness quantity determines
how quickly identification error decreases with budget~\citep{audibert2010bestarm,
kaufmann2016complexity}. Our method combines these two regimes: cheap structural search
over regions where aggregation is reliable, and targeted leaf-level certification where it
is not.

\section{Theory}
\label{sec:theory-main}
We define the tree model of \autoref{sec:setup} and the optimistic descent of \autoref{sec:algorithm}. Both are organized around how far a cell's best leaf can exceed its mean. The model makes that quantity the bias of a cheap probe, and the descent carries it as its optimism bonus. The guarantees of \autoref{sec:theory} are stated in terms of this gap, which the certificate of \autoref{sec:smoothness} bounds from data.

\subsection{Notation and Preliminaries}
\label{sec:setup}

\paragraph{The structure.}
We study a complete $b$-ary tree of depth $D$, with $N=b^{D}$ leaves. A node $v$ at level $\ell\in\{0,\dots,D\}$ spans a contiguous block of $|\leaves{v}|=b^{D-\ell}$ leaves. Each leaf $x$ has a true mean reward
$\mu(x)\in[0,1]$, and the value of any node is the average of its subtree's leaves,
\begin{equation}
  \val(v) \;=\; \frac{1}{|\leaves{v}|}\sum_{x\in\leaves{v}}\mu(x),
  \qquad\text{so}\qquad \val(v)=\frac{1}{b}\sum_{c\in\mathrm{children}(v)}\val(c).
  \label{eq:avg-backup}
\end{equation}
We denote $\mu^\star=\max_x\mu(x)$ for the best leaf. The aggregation bias, $\bias(v)$, of an internal node is the gap between the largest leaf mean in its subtree, $\max_{x\in\leaves{v}}\mu(x)$, and the average $\val(v)$ returned.

We measure the distance between leaves by their lowest common ancestor (LCA), $d(x,y)=\rho^{\mathrm{level}(\mathrm{LCA}(x,y))}$ with $\rho\in(0,1)$. Two leaves are considered close when they share a deep ancestor. On a prefix tree, it is the longest-common-prefix distance. We define $\mu$ tree-Lipschitz with constant $L$ when $|\mu(x)-\mu(y)|\le L\,d(x,y)$ for all leaves $x,y$, or equivalently when $\mu$ oscillates by at most $L\rho^{\ell}$ inside any level-$\ell$ subtree. Therefore, the assumed hiearchical bias schedule follows from tree-Lipschitzness alone, 

\begin{equation}
  \bias(v) \;=\; \max_{x\in\leaves{v}}\mu(x) - \val(v)\;\le\;\spread(\ell)\;=\;L\,\rho^{\ell},
  \qquad \ell=\mathrm{level}(v).
  \label{eq:spread}
\end{equation}

Since $d(x,y)$ is exactly the fraction of all leaves underneath the LCA of $x$ and $y$, we choose $\rho=1/b$.

Because global smoothness is unrealistic, we work in the piecewise regime. We assume $\mu$ is tree-Lipschitz
except on a set of $K$ discontinuities satisfying a dispersion
condition~\citep{balcan2018dispersion}. The bound in Equation~\eqref{eq:spread} holds on any cell that contains no discontinuity. In a cell that contains a discontinuity the bound need not hold, and only the trivial bound $\bias(v)\le 1$ from $\mu\in[0,1]$ remains. 

We write $\nodim$ for the near-optimality dimension of the smooth
part. Thus, the number of $\eps$-optimal cells at resolution $\eps$ grows as $\eps^{-\nodim}$.

\paragraph{Feedback and objectives.}
We provide the learner with two query types, a cheap probe of an internal node and an expensive evaluation of a leaf. A probe of node $v$ follows a uniformly random path to a leaf $L\in\leaves{v}$ and returns $X=\mu(L)+\eta$ at cost $\cp$, so its expectation is the subtree average $\val(v)$. A leaf evaluation of $x$ returns $\mu(x)+\eta$ at cost $\cl\ge\cp$. In both cases the noise $\eta$ is Gaussian, $\eta\sim\mathcal{N}(0,\sigma^2)$, and independent of the leaf drawn. The budget is measured in total cost spent rather than in the number of pulls.

We study two objectives on this model. The first is regret minimization, where the learner commits
to a node $v_t$ each round and receives a probe reward, scored by its cumulative regret
against the best leaf, $R_n=\sum_{t\le n}(\mu^\star-\val(v_t))$. The second is fixed-budget
identification, where the learner spends a total cost budget $B$ and returns the top-$k$
leaves, scored by the misidentification probability $\Prob(\widehat{x}\neq x^\star)$. We focus here
on situations where $k=1$. The question for both objectives is when exploiting Equation~\eqref{eq:avg-backup} and
Equation~\eqref{eq:spread} beats a structure-blind learner that pays leaf cost everywhere.
\autoref{sec:theory} answers it in terms of the probe to cost ratio $\cp/\cl$ and the
violation count $K$. Both objectives concern only the optimum. The learner is scored on the
best leaf to which it commits or returns, so suboptimal regions may stay coarsely resolved.

\subsection{The Optimistic Tree-Bandit Algorithm}
\label{sec:algorithm}

Our regret-minimization algorithm is the standard optimistic tree bandit~\citep{bubeck2011xarmed,azar2014hct}, modified to the average-backup tree of
\autoref{sec:setup}. We summarize it here because our contributions (\autoref{sec:smoothness},
\autoref{sec:theory}) modify its bias term and its sampling.

For a node $v$ at level $\ell$ with $T(v)$ probes and empirical subtree-average $\muhat(v)$,
where the values are 
\begin{equation}
  U(v) = \muhat(v) + c\sqrt{\tfrac{2\ln t}{T(v)}} + \spread(\ell),
  \qquad
  B(v) = \min\!\Big(U(v),\, \max_{c\in\mathrm{children}(v)} B(c)\Big).
  \label{eq:ucb}
\end{equation}
The bias bonus $\spread(\ell)$ is the aggregation bound Equation~\eqref{eq:spread}, where the best leaf under $v$ 
can exceed the mean we estimate. Thus, $U(v)$ is a valid upper bound
on the best reachable leaf. Each round descends from the root to a leaf of the explored tree, following the child with the largest $B$-value at every step. It then probes that leaf and updates $\muhat$, $T$, $U$, and $B$ along the path.

The confidence radius $r(v)=c\sqrt{2\ln t/T(v)}$ shrinks with probes while the bias floor
$\spread(\ell)$ is fixed, where 
\begin{equation}
  r(v)\le\spread(\ell)
  \quad\Longleftrightarrow\quad
  T(v)\ge \frac{c^2\ln t}{\spread(\ell)^2},
  \label{eq:expand}
\end{equation}

\begin{proposition}
\label{prop:regret}
On a tree-Lipschitz instance of near-optimality dimension $\nodim$ with
$\spread(\ell)=L\rho^{\ell}$, the descent incurs regret
$R_n=\Otil\big(n^{(\nodim+1)/(\nodim+2)}\big)$, optimal up to logarithmic factors for this
class. Equation~\eqref{eq:expand} caps the explored tree at
$\Otil\big(n^{\nodim/(\nodim+2)}\big)$ nodes.
\end{proposition}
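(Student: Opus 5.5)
The plan is to follow the standard HOO/HCT analysis, which is valid here because $U(v)$ is a high-probability upper bound on $\max_{x\in\leaves{v}}\mu(x)$. Equation~\eqref{eq:avg-backup} makes every probe of $v$ an unbiased sample of $\val(v)$, with noise that is bounded plus Gaussian and hence sub-Gaussian. A union bound over nodes and rounds then puts us on a good event $\mathcal{E}$ of probability $1-O(1/n)$, on which $|\muhat(v)-\val(v)|\le r(v)$ holds simultaneously for every explored $v$ and every $t\le n$. Combined with Equation~\eqref{eq:spread}, this gives $U(v)\ge \max_{x\in\leaves{v}}\mu(x)$. Induction upward through the min in Equation~\eqref{eq:ucb} then shows that $B(v)\ge\mu^\star$ along the path containing $x^\star$. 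It follows that any node $v$ selected at round $t$ satisfies $\mu^\star\le B(v)\le U(v)\le \val(v)+2r(v)+\spread(\ell)$. The complement of $\mathcal{E}$ contributes only $O(1)$ regret.

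The second step bounds the probes each suboptimal node can receive. Define the gap $\Delta(v)=\mu^\star-\val(v)$. If $v$ is selected, the inequality above gives $\Delta(v)\le 2r(v)+\spread(\ell)$. For $v$ with $\Delta(v)>2\spread(\ell)$, this forces $r(v)\ge \Delta(v)/4$, so $T(v)\le O(\ln n/\Delta(v)^2)$. A child is opened only after its parent passes the threshold in Equation~\eqref{eq:expand}, so every explored node has a parent $p$ that was selected with $r(p)\le\spread(\ell-1)$. Its parent is therefore $O(\spread(\ell-1))$-optimal, using the fact that $\max-\val\le L\rho^{\ell}$, so that near-optimal leaves force near-optimal averages. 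By the definition of $\nodim$, the number of level-$\ell$ nodes that are $O(L\rho^{\ell})$-optimal is at most $C(L\rho^{\ell})^{-\nodim}$.

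The third step is a depth split. Fix a cutoff level $H$. Nodes deeper than $H$ are all $O(L\rho^{H})$-optimal, so the rounds spent on them contribute at most $O(n\,L\rho^{H})$ regret. A node at level $\ell\le H$ receives at most $O(\ln n/\spread(\ell)^2)$ probes before expanding, plus the gap-limited probes counted above. Each such probe costs regret $O(\spread(\ell))$, so level $\ell$ contributes $\sum_{\ell\le H} C\rho^{-\ell\nodim}\cdot \ln n\,\rho^{-2\ell}\cdot\rho^{\ell}=\Otil(\rho^{-H(\nodim+1)})$. Balancing $n\rho^{H}$ against $\rho^{-H(\nodim+1)}$ gives $\rho^{-H}=n^{1/(\nodim+2)}$ and hence $R_n=\Otil(n^{(\nodim+1)/(\nodim+2)})$.

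The tree-size claim follows from the same sum without the regret factor. The number of explored nodes is at most $\sum_{\ell\le H}C\rho^{-\ell\nodim}$, since each expanded node spawns $b$ children and $b$ is a constant. This is $\Otil(\rho^{-H\nodim})=\Otil(n^{\nodim/(\nodim+2)})$. A node at depth beyond $H$ would need $\ln n\,\rho^{-2H}\gtrsim n^{2/(\nodim+2)}$ probes on each of its ancestors to be reached, and the budget of $n$ rounds caps how many such expansions can occur.

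Optimality is by reduction to the known lower bound for $\mathcal{X}$-armed bandits with near-optimality dimension $\nodim$~\citep{bubeck2011xarmed}. We would embed their hard instance, a needle of height $\propto\rho^{H}$ placed inside one of $\rho^{-H\nodim}$ cells, into leaf means on the tree. The probe model adds only averages of leaf samples, which carry less information than leaf pulls. I expect this reduction to be the main obstacle, because it requires verifying that the hard family is tree-Lipschitz under the LCA metric with $\rho=1/b$, and that cheap probes of the subtree averages reveal no more than the leaf pulls already do. The upper bound is essentially routine once the good event is set up.
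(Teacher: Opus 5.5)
Your proposal is correct and follows the same route as the paper, which establishes \autoref{prop:regret} only by invoking the standard HOO/HCT analysis with $\spread(\ell)=L\rho^{\ell}$ as a valid bias term. You write that analysis out: a valid optimistic index on a good event, threshold- and gap-limited probe counts, the depth split at $\rho^{-H}=n^{1/(\nodim+2)}$, and the same level-by-level count for the explored-tree size. The paper likewise gives nothing beyond the citation for optimality, and the obstacle you flag there goes away once you note that a random-path probe of $v$ is a uniformly random leaf pull with the leaf's identity hidden and the same expected regret $\mu^\star-\val(v)$. Any probe-using algorithm is therefore simulated by a leaf-only one, and the leaf-level lower bound transfers.
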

This gives the $\mathcal{X}$-armed-bandit guarantee~\citep{bubeck2011xarmed,azar2014hct}. 

Our contributions replace the
two assumptions, the bias term $\spread(\ell)$, which
\autoref{sec:smoothness} certifies from data, and the uniform descent, which \autoref{sec:theory}
replaces with discontinuity-guided sampling.

\subsection{Certifying Smoothness from Data}
\label{sec:smoothness}

We describe here the replacement of the assumed schedule spread with a high-probability upper bound on the aggregation bias of Equation~\eqref{eq:spread}, computed only from random-path probes of $v$.

\paragraph{The bound.}

The certificate has three properties. First, a log-sum-exp bound, the smooth upper bound on the maximum, trades the maximum for
an estimable function, since for any $\lambda>0$,
\begin{equation}
  \bias(v) \;\le\; \frac{1}{\lambda}\Big[\log m + \log M_\mu(\lambda)\Big] - \val(v),
  \qquad M_\mu(\lambda)=\E_L e^{\lambda\mu(L)} .
  \label{eq:lse}
\end{equation} where $M_\mu(\lambda)$ is the moment-generating function (MGF) of the leaf means.
Second, the noise divides out of the observable MGF. A probe is $X=\mu(L)+\eta$ with
$\eta\sim\mathcal{N}(0,\sigma^2)$ independent of $L$, so the MGF factors as
$\E e^{\lambda X}=M_\mu(\lambda)\,M_\eta(\lambda)$ with $M_\eta(\lambda)=e^{\lambda^2\sigma^2/2}$
known exactly. Dividing gives $M_\mu(\lambda)=\E e^{\lambda X}\,e^{-\lambda^2\sigma^2/2}$,
so any upper confidence bound on $\E e^{\lambda X}$ is an upper confidence bound on
$M_\mu(\lambda)$. Exact knowledge of $M_\eta$ is what makes this step valid. Under a
sub-Gaussian upper bound alone the division would run the wrong way and yield only a lower
bound on $M_\mu$. Third, an empirical-Bernstein bound~\citep{maurer2009empirical} turns $n$
probes into a confidence interval for $\E e^{\lambda X}$. Since Gaussian probes are unbounded, we truncate each probe to $[-z\sigma,\,1+z\sigma]$
and assume the probability $2n\bar\Phi(z)$ that some probe leaves this range (\autoref{rem:trunc}). \autoref{alg:certify} lists the steps per iteration and
returns the quantity that \autoref{thm:mgf} certifies.

\begin{algorithm}[t]
\caption{Aggregation-bias certificate at a node $v$. \textsc{EB-Upper} and \textsc{EB-Lower}
are the empirical-Bernstein confidence bounds (\citet{maurer2009empirical}).}
\label{alg:certify}
\begin{algorithmic}[1]
\Require probes $X_1,\dots,X_n$ from random paths under $v$; width $m=|\leaves{v}|$; noise
scale $\sigma$; grid $\Lambda$; level $\delta$
\Ensure $\widehat{\bias}(v)\ge\bias(v)$ with probability at least $1-\delta$
\State $\bar X \gets \tfrac{1}{n}\sum_{i\le n} X_i$
  \Comment{probe mean, an unbiased estimate of $\val(v)$}
\State $\eps_n \gets \textsc{EB-Lower}\big(X_{1:n},\,\delta/(|\Lambda|+1)\big)$
  \Comment{so $\bar X-\eps_n\le\val(v)$}
\For{$\lambda\in\Lambda$}
  \State $\widehat{G}(\lambda) \gets
  \textsc{EB-Upper}\big(e^{\lambda X_{1:n}},\,\delta/(|\Lambda|+1)\big)$
  \Comment{upper bound on $\E e^{\lambda X}$}
  \State $U(\lambda) \gets \tfrac{1}{\lambda}\big[\log m + \log\widehat{G}(\lambda)
  - \tfrac{1}{2}\lambda^{2}\sigma^{2}\big] - \big(\bar X-\eps_n\big)$
  \Comment{deconvolve, then apply \eqref{eq:lse}}
\EndFor
\State \Return $\widehat{\bias}(v) \gets \min_{\lambda\in\Lambda} U(\lambda)$
\end{algorithmic}
\end{algorithm}

\begin{theorem}
\label{thm:mgf}
Fix a node $v$, let $\eta\sim\mathcal{N}(0,\sigma^2)$, and compute
$\widehat{G}_{\uparrow}(\lambda)$ and $\eps_n$ from $n$ probes truncated to
$[-z\sigma,\,1+z\sigma]$. With probability at least $1-\delta-2n\bar\Phi(z)$,
\begin{equation}
  \bias(v)\;\le\;\min_{\lambda\in\Lambda}\Big\{\tfrac{1}{\lambda}\big[\log m
    + \log\widehat{G}_{\uparrow}(\lambda) - \tfrac{1}{2}\lambda^2\sigma^2\big]
    - \big(\bar X - \eps_n\big)\Big\},
\end{equation}
where $\bar X$ is the probe mean and $\eps_n$ is a lower-confidence radius on
$f(v)=\E[X]$. The level $\delta$ is split across the $|\Lambda|+1$ confidence events at
this node, and $2n\bar\Phi(z)$ is the probability that some probe leaves the truncation
range. The bound needs only $O(|\Lambda|)$ running moments per node
(\autoref{app:certification}).
\end{theorem}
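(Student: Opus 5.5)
The argument is a chain of three deterministic or high-probability inequalities, combined by a union bound over $|\Lambda|+2$ bad events.

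\emph{Step 1: log-sum-exp bound.} I first establish \eqref{eq:lse}, which is deterministic. Write $m=|\leaves{v}|$ and $x^\star_v=\arg\max_{x\in\leaves{v}}\mu(x)$. Because $L$ is uniform on $\leaves{v}$ and every summand is positive,
\[
e^{\lambda\mu(x^\star_v)}\le\sum_{x\in\leaves{v}}e^{\lambda\mu(x)}=m\,M_\mu(\lambda).
\]
Hence $\max_x\mu(x)\le\frac1\lambda[\log m+\log M_\mu(\lambda)]$ for every $\lambda>0$. Subtracting $\val(v)$ gives \eqref{eq:lse}.

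\emph{Step 2: deconvolution.} Next I use independence of $L$ and $\eta$ to write $\E e^{\lambda X}=M_\mu(\lambda)e^{\lambda^2\sigma^2/2}$. It follows that $\log M_\mu(\lambda)=\log\E e^{\lambda X}-\tfrac12\lambda^2\sigma^2$. Since $\log$ is monotone, an event of the form $\E e^{\lambda X}\le\widehat G_\uparrow(\lambda)$ gives an upper bound on $\log M_\mu(\lambda)$.

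\emph{Step 3: concentration.} I define the good truncation event $\mathcal{T}=\{X_i\in[-z\sigma,1+z\sigma]\ \forall i\}$. A union bound over the $n$ probes gives $\Prob(\mathcal{T}^c)\le 2n\bar\Phi(z)$. This uses $\mu(L)\in[0,1]$ and the two Gaussian tails, each of which has mass at most $\bar\Phi(z)$.

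On $\mathcal{T}$ the truncated samples coincide with the raw ones. I then apply the empirical-Bernstein inequality of \citet{maurer2009empirical} to the i.i.d.\ truncated variables. Each application is at level $\delta/(|\Lambda|+1)$, once to $X_i$ (lower tail) and once to $e^{\lambda X_i}$ for each $\lambda\in\Lambda$ (upper tail). The variables lie in the known ranges $[-z\sigma,1+z\sigma]$ and $[e^{-\lambda z\sigma},e^{\lambda(1+z\sigma)}]$, which are rescaled to $[0,1]$ as Maurer--Pontil require. A union bound then gives, with probability at least $1-\delta-2n\bar\Phi(z)$, both $\bar X-\eps_n\le\E[\tilde X]$ and $\E e^{\lambda\tilde X}\le\widehat G_\uparrow(\lambda)$ simultaneously for all $\lambda\in\Lambda$.

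\emph{Main obstacle: truncated versus untruncated expectations.} The empirical-Bernstein intervals bound the means of the \emph{truncated} variable $\tilde X$. Steps 1--2, however, need $\E X=\val(v)$ and $\E e^{\lambda X}$. I plan two fixes.

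The cleanest one is to read the truncation as a clip of the sample values, and then bound the bias it introduces. Clipping from above only lowers $\E e^{\lambda X}$, so $\E e^{\lambda\tilde X}\ge\E e^{\lambda X}-(\text{upper tail term})$. The clip at $-z\sigma$ perturbs the mean by an amount of order $\sigma\bar\Phi(z)$ times a Mills-ratio factor. Both corrections are exponentially small in $z^2$.

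The fix consistent with the statement's accounting is different. I would condition on the probe-wise truncation event and argue, as \autoref{rem:trunc} apparently intends, that the probability mass $2n\bar\Phi(z)$ already absorbs the discrepancy. To make this rigorous I would apply the concentration to the conditional law of $X$ given $X\in[-z\sigma,1+z\sigma]$, and check that this conditioning moves $\E X$ and $\E e^{\lambda X}$ by at most terms dominated by $\bar\Phi(z)$. If necessary I would fold those terms into $\eps_n$ and $\widehat G_\uparrow$.

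\emph{Combining.} On the good event I chain Step 1, Step 2 and Step 3 to obtain
\[
\bias(v)\le\tfrac1\lambda[\log m+\log\widehat G_\uparrow(\lambda)-\tfrac12\lambda^2\sigma^2]-(\bar X-\eps_n)
\]
for every $\lambda\in\Lambda$ simultaneously, and hence also for the minimum over $\lambda$. The storage claim then follows because $\bar X$, $\widehat G_\uparrow(\lambda)$ and $\eps_n$ depend only on running sums of $X_i$, $X_i^2$, $e^{\lambda X_i}$ and $e^{2\lambda X_i}$.
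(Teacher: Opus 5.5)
\textbf{Gap in Step~3: the truncation is not handled.} Your three steps and the union bound (the $|\Lambda|+1$ Bernstein events plus the range event) follow the paper's route exactly. The paper's own Step~3 only asserts that the empirical-Bernstein inequality ``applies'' on $E_{\mathrm{rng}}$; it never addresses truncated versus raw expectations. You are right that the difficulty sits there, but neither of your fixes closes it.

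Maurer--Pontil controls $\E\tilde X$ and $\E e^{\lambda\tilde X}$ for the clipped (or conditioned) probe. Steps~1--2 instead need $\E X=\val(v)$ and $\E e^{\lambda X}=M_\mu(\lambda)e^{\lambda^2\sigma^2/2}$, and in each case the discrepancy can have the wrong sign:
\begin{itemize}
\item Clipping at $-z\sigma$ can raise the mean, by at most $\sigma\phi(z)$ with $\phi$ the standard normal density. This is small but nonzero, so the stated inequality still does not follow.
\item Clipping at $1+z\sigma$ lowers the MGF, and here your claim that the correction is exponentially small in $z^2$ is false. When $\mu(L)=1$ the lost mass is governed by $\E[e^{\lambda\eta}\mathbf{1}\{\eta>z\sigma\}]=e^{\lambda^2\sigma^2/2}\bar\Phi(z-\lambda\sigma)$. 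That is a relative error of $\bar\Phi(z-\lambda\sigma)$, which is of order one once $\lambda\sigma\gtrsim z$.
\end{itemize}
Your second fix fails for the same reason. Conditioning on the range event produces essentially the same relative undershoot. The mass $2n\bar\Phi(z)$ only measures how often raw and clipped samples differ; it cannot absorb a bias in the quantity the confidence bound targets.

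\textbf{The statement is false as written for an arbitrary finite grid $\Lambda$.} On the range event every $Y_i\le e^{\lambda(1+z\sigma)}$ and $\hat V_n\le e^{2\lambda(1+z\sigma)}/2$. Hence $\log\widehat G_{\uparrow}(\lambda)\le\lambda(1+z\sigma)+c_n$, where $c_n=\log\bigl(1+\sqrt{\log((|\Lambda|+1)/\delta)/n}+7\log((|\Lambda|+1)/\delta)/(3(n-1))\bigr)$ does not depend on $\lambda$. Also $\bar X\ge-z\sigma$, and $\eps_n$ is bounded independently of $\lambda$. The right-hand side is therefore at most $(\log m+c_n)/\lambda+1+2z\sigma+\eps_n-\lambda\sigma^2/2$, which is negative for large $\lambda$, whereas $\bias(v)\ge0$.

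If $\Lambda$ contains such a $\lambda$, the certificate fails with probability at least $1-2n\bar\Phi(z)$. This exceeds the allowed $\delta+2n\bar\Phi(z)$, for example at $z=3$, $n=100$, $\delta=0.05$.

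\textbf{What a correct version needs.} The missing ingredient is an explicit tail correction:
\begin{itemize}
\item Apply Bernstein to the clipped variables. They are bounded almost surely, so no range event is needed.
\item Replace $\widehat G_{\uparrow}(\lambda)$ by $\widehat G_{\uparrow}(\lambda)+e^{\lambda+\lambda^2\sigma^2/2}\bar\Phi(z-\lambda\sigma)$.
\item Replace $\eps_n$ by $\eps_n+\sigma\phi(z)$.
\end{itemize}
This gives a valid certificate, nontrivial only for grid points with $\lambda\sigma$ well below $z$. It is a corrected theorem, not the one stated. Your Steps~1--2 and the storage count are fine.
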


\begin{corollary}
\label{cor:allnodes}
Run the certificate at a set $\mathcal{V}$ of nodes with $n_v$ probes and level
$\delta_v$ at node $v$, where $\sum_{v\in\mathcal{V}}\delta_v\le\delta$. Then the bounds
of \autoref{thm:mgf} hold simultaneously for all $v\in\mathcal{V}$ with probability at
least $1-\delta-2\bar\Phi(z)\sum_{v\in\mathcal{V}} n_v$. The implementation splits
$\delta$ evenly across the nodes it probes.
\end{corollary}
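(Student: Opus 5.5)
The plan is a union bound over the per-node events of \autoref{thm:mgf}, with the truncation failures tracked separately from the confidence failures. For each $v\in\mathcal{V}$, \autoref{thm:mgf} run at level $\delta_v$ with $n_v$ probes gives two bad events. The first is $E_v$: some probe at $v$ leaves $[-z\sigma,1+z\sigma]$. Since each probe is $X=\mu(L)+\eta$ with $\mu(L)\in[0,1]$, leaving the range requires $|\eta|>z\sigma$. That has probability $2\bar\Phi(z)$ per probe, so $\Prob(E_v)\le 2n_v\bar\Phi(z)$ by a union bound over the $n_v$ probes. The second is $F_v$: one of the $|\Lambda|+1$ empirical-Bernstein intervals at $v$ fails. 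Each of these is allotted $\delta_v/(|\Lambda|+1)$, so $\Prob(F_v)\le\delta_v$.

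\autoref{thm:mgf} shows that outside $E_v\cup F_v$ the displayed inequality holds at $v$. On the complement of $E_v$, the truncated and untruncated probes coincide, so the EB bounds on the truncated variables bound the true $\E e^{\lambda X}$ and $\val(v)$. The deconvolution and log-sum-exp steps are deterministic. Hence the simultaneous statement fails only on $\bigcup_v(E_v\cup F_v)$. Its probability is at most
\[
\sum_{v\in\mathcal{V}}\delta_v+2\bar\Phi(z)\sum_{v\in\mathcal{V}}n_v\le\delta+2\bar\Phi(z)\sum_{v\in\mathcal{V}}n_v ,
\]
which is the claim. The even split $\delta_v=\delta/|\mathcal{V}|$ trivially satisfies the summability condition.

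The one point needing care is that the union bound requires no independence across nodes, which matters because probes may be shared or adaptively chosen. The per-node statement, however, must hold for the specific $n_v$ probes used. If $n_v$ or the choice of $\mathcal{V}$ is data-dependent, as in the adaptive descent, I would either fix $\mathcal{V}$ and $n_v$ in advance, or strengthen to a union over all candidate nodes and sample sizes. In the latter case I would take $\delta_{v,n}$ summable over $n$ (e.g.\ $\propto \delta_v/n^2$), applying the EB bounds to the first $n$ probes at $v$ in a fixed order. Each probe at a node is then counted once in the truncation term, and the bound stays at $2\bar\Phi(z)\sum_v n_v$. I would state the corollary under the fixed-allocation reading and remark on this extension. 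I expect handling adaptivity to be the only real obstacle; the rest is a direct union bound.
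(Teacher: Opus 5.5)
Your argument is the paper's own. The appendix proves this corollary inside Step~3 of the certificate proof: it allocates $\delta_v$ with $\sum_v\delta_v\le\delta$, repeats the per-node argument (the $|\Lambda|+1$ empirical-Bernstein slices plus the range event), and lets the range events add to $2\bar\Phi(z)\sum_v n_v$. That is exactly your union over $\bigcup_v(E_v\cup F_v)$.

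Your adaptivity remark goes beyond the paper and is worth keeping. The paper's even split over ``the nodes actually probed'' is precisely the data-dependent case that a plain union bound does not cover. Your repair is the standard one: fix $\delta_v$ over all candidate nodes in advance, take $\delta_{v,n}$ summable in $n$, and apply the bounds to the first $n$ draws of a per-node i.i.d.\ stack. One adjustment is needed there. With random $n_v$, the truncation term must be stated either as $2\bar\Phi(z)\,\E\sum_v n_v$ or with a deterministic cap on $n_v$. This follows because the decision to take an $i$-th probe at $v$ is made before that probe is drawn.

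One sentence in your per-node justification does not hold as written, although the paper's Step~3 makes the same move. Off $E_v$ the probes coincide with their clipped versions $\tilde X_i$, so the empirical side of the Bernstein bound is unchanged. But the population side it controls is still $\E e^{\lambda\tilde X}$. For $\lambda>0$ this can be strictly smaller than $\E e^{\lambda X}$, because clipping removes the upper Gaussian tail, which is where $e^{\lambda X}$ is largest. Likewise, $\bar X-\eps_n\le\E\tilde X$ does not give $\bar X-\eps_n\le\val(v)$.

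For the corollary this is harmless if you invoke \autoref{thm:mgf} as a black box and union-bound its per-node failure events directly. To make the per-node step itself sound, use
$\E e^{\lambda X}\le\E e^{\lambda\tilde X}+M_\mu(\lambda)e^{\lambda^2\sigma^2/2}\bar\Phi(z-\lambda\sigma)$.
After deconvolution this gives $M_\mu(\lambda)\le\widehat{G}_{\uparrow}(\lambda)e^{-\lambda^2\sigma^2/2}/\Phi(z-\lambda\sigma)$, i.e.\ an extra term $-\tfrac{1}{\lambda}\log\Phi(z-\lambda\sigma)$ in the certificate. On the mean side, add $\val(v)\ge\E\tilde X-\sigma\big(\phi(z)-z\bar\Phi(z)\big)$.
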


The truncation level trades coverage against the width of the Bernstein range, and
\autoref{rem:trunc} gives the numbers for the implementation's $z=3$. \leantag

\paragraph{Properties and use.} When the leaf means are sub-Gaussian with proxy $\sigma_w$, optimizing over $\lambda$ gives the bound $\sigma_w\sqrt{2\log m}$. This is the classical light-tail rate, obtained here as a certified guarantee rather than the usual heuristic. In the worst case the log-sum-exp approaches the true maximum as $\lambda$ grows, and the bound degrades to the maximum-minus-mean gap itself. The certificate is therefore valid on every instance and tight on the benign instances.

In a Lipschitz cell the measured within-cell spread obeys $\sigma_w\lesssim L\rho^{\ell}$.
When $\sigma_w$ instead exceeds this prediction by a constant factor, the certificate has
established a smoothness violation in that cell, and we flag it. Under the
piecewise-Lipschitz model of \autoref{sec:setup}, smoothness fails only at the $K$
discontinuities, so a flagged cell contains a discontinuity. Only flagged cells have their
bias bonus inflated, from the tight floor $L\rho^{\ell}$ to the data-driven
$\sigma_w\sqrt{2\log m}$. A single discontinuity flags one cell at every level it crosses,
so the flagged set has $O(KD)$ cells rather than $K$, and the guarantees of
\autoref{sec:theory} are stated in terms of that set.

\subsection{Guarantees}
\label{sec:theory}

We prove two guarantees. First, the fixed-budget identification bound quantifies the payoff of discontinuity-guided sampling. Second, the regret bound extends the guarantee of the tree-bandit algorithm of \autoref{sec:algorithm} to the piecewise-Lipschitz case. Each proof verifies the preconditions of a classical result and adds self-contained lemmas for detection and discontinuity accounting. Full proofs are in \autoref{app:identification} and \autoref{app:regret}.

\paragraph{Fixed-budget identification.} The edge-targeted algorithm first detects the violation cells with cheap probes (\autoref{sec:smoothness}). It then runs the optimistic descent with the schedule $\spread(\ell)$ outside the flagged set and with leaf-level certification inside it. Its complexity is
\begin{equation}
  \Hedge \;=\; c_0(\nodim,\Delta;\cp)\;+\;\sum_{k=1}^{K} \cl\cdot h_k,
  \qquad h_k=\sum_{x\in C_k}\frac{\sigma^2}{\max(\Delta_x,\Delta)^2},
  \label{eq:hedge}
\end{equation}
where $c_0$ is the cost of the structural search over the smooth part, governed by $\nodim$ and counted in cheap probes at cost $\cp$, and $h_k$ is the fixed-budget hardness of leaf-certifying the $k$-th violation cell, counted in expensive evaluations at cost $\cl$. Here $\Delta$ is the top-$1$ gap and $\Delta_x=\mu^\star-\mu(x)$. This clean split into a smooth term and a per-violation term holds on the event that the detector flags exactly the violation cells. The certificate does not deliver that event for free. Assumption~\ref{as:margin} below is what guarantees it with probability $1-\delta_{\mathrm{det}}$, and the $\delta_{\mathrm{det}}$ term in \autoref{thm:fixed-budget} charges its failure.

\begin{assumption}
\label{as:margin}
At the detection sample size, the within-cell spread of every violation cell exceeds the Lipschitz floor of its level by a margin $\gamma>0$. The margin is needed for one direction only. A one-sided confidence bound already prevents smooth cells from being flagged, and the margin is what makes every true violation statistically detectable. We write $\delta_{\mathrm{det}}$ for the probability that this joint detection event fails, either by flagging a smooth cell or by missing a violation cell.
\end{assumption}

\begin{theorem}
\label{thm:fixed-budget}
Under Assumption~\ref{as:margin}, there is $\kappa>0$ such that at budget $B$ the
edge-targeted algorithm misidentifies the top leaf with probability
\begin{equation}
  \Prob(\text{error}) \;\le\; \delta_{\mathrm{det}} \;+\; \Otil(N)\,\exp\!\big(-\kappa\, B/\Hedge\big).
\end{equation}
\end{theorem}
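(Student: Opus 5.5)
We decompose the error event on the detection event $\mathcal{E}_{\mathrm{det}}$ from Assumption~\ref{as:margin}. The bound is
\[
\Prob(\text{error})\le\Prob(\mathcal{E}_{\mathrm{det}}^c)+\Prob(\text{error}\cap\mathcal{E}_{\mathrm{det}}).
\]
By definition the first term equals $\delta_{\mathrm{det}}$. On $\mathcal{E}_{\mathrm{det}}$ the flagged set is exactly the union of violation cells. Every unflagged cell is then tree-Lipschitz, so Equation~\eqref{eq:spread} is a valid bias bonus there. Conditioned on this event, the remaining work is a fixed-budget analysis of a two-phase procedure. The first phase is an optimistic descent over the smooth part with probe cost $\cp$. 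The second is leaf-level best-arm identification inside the violation cells $C_1,\dots,C_K$ with cost $\cl$. One subtlety is that detection probes and later probes may reuse data. We plan to avoid this dependence by splitting samples: the detection probes are discarded, and their cost is absorbed into the $\Otil$ and into $c_0$.

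\emph{Budget allocation.} We split $B$ proportionally to the two terms of $\Hedge$. The structural phase gets $B_0=B\,c_0/\Hedge$, and cell $C_k$ gets $B_k=B\,\cl h_k/\Hedge$. Each phase therefore receives a budget equal to $B/\Hedge$ times its own complexity. The final answer is the best among the structural phase's returned leaf and the $K$ cell winners. Resolving these $K+1$ candidates against each other is a further small best-arm instance whose hardness is dominated by $\Hedge$.

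\emph{Smooth part.} Here we invoke the fixed-budget analogue of Proposition~\ref{prop:regret}. In a cell at level $\ell$ whose sub-optimality exceeds $\spread(\ell)$, a Gaussian (sub-Gaussian after truncation) Chernoff bound at sample size $T(v)\gtrsim \sigma^2\ln(\cdot)/\spread(\ell)^2$ (cf.\ Equation~\eqref{eq:expand}) gives misordering probability $\exp(-cT(v)\spread(\ell)^2/\sigma^2)$. Counting cells with the near-optimality dimension ($\eps^{-\nodim}$ cells per level, down to resolution $\Delta$) gives total cost $c_0(\nodim,\Delta;\cp)$. Union-bounding over at most $\Otil(N)$ nodes then bounds the failure probability by $\Otil(N)\exp(-\kappa_0 B_0/c_0)$.

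\emph{Violation cells.} Inside each $C_k$ we run a standard fixed-budget best-arm algorithm, such as successive rejects~\citep{audibert2010bestarm}, with gaps clipped at $\Delta$. This gives error $\le |C_k|^2\exp(-\kappa_1 B_k/(\cl h_k))$. Every exponent equals $B/\Hedge$ up to constants, so a union bound over the $K+2$ sub-events yields $\Otil(N)\exp(-\kappa B/\Hedge)$ with $\kappa=\min(\kappa_0,\kappa_1)$ divided by constants. The $\Otil$ also absorbs the truncation failure probability $2n\bar\Phi(z)$ of Corollary~\ref{cor:allnodes}, taking $z$ logarithmic.

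\emph{Main obstacle.} The hardest step is the structural phase. We must show that optimistic descent with a \emph{fixed budget}, rather than anytime regret, explores only $O(\eps^{-\nodim})$ cells per level. We must also rule out the failure where a violation cell's contribution makes a smooth ancestor look suboptimal. On $\mathcal{E}_{\mathrm{det}}$ this is handled by replacing the bonus of flagged ancestors with the trivial bound $1$, so the optimal path is never pruned. We expect to charge the resulting extra expansions, $O(KD)$ flagged cells, to the $\cl h_k$ terms.
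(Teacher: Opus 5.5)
Your overall architecture matches the paper's proof. You condition on the detection event and pay $\delta_{\mathrm{det}}$. You split $B$ in proportion to $c_0$ and the $\cl h_k$, so that every phase's exponent is $B/\Hedge$. You then run successive rejects inside the violation cells and take a union bound.

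\textbf{The gap: the structural phase.} You flag this step as the main obstacle and leave it open, and the route you sketch would not close it. You propose to run the optimistic descent of Proposition~\ref{prop:regret}, with sample sizes governed by the $\ln t$ radii of Equation~\eqref{eq:expand}, and then read off a fixed-budget bound. Under such a regret-minimizing allocation, a suboptimal cell receives only $O(\ln(B_0/\cp)/\mathrm{gap}^2)$ probes. The Chernoff exponent for a misordering is then $O(\ln B_0)$ rather than linear in $B_0$, so the error decays only polynomially in the budget.

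This is intrinsic to the allocation, not an artifact of a loose analysis. By the regret/pure-exploration trade-off of Bubeck, Munos and Stoltz, any allocation with logarithmic cumulative regret has simple regret, and hence misidentification probability, at least inverse-polynomial in the horizon on some instances. Your per-cell estimate $\exp(-cT(v)\spread(\ell)^2/\sigma^2)$ becomes $\exp(-\kappa_0 B_0/c_0)$ only if every competing cell receives $T(v)\propto (B_0/c_0)\,\sigma^2/\spread(\ell)^2$ probes. That is a budget-driven allocation, not an optimistic one. Moreover, the near-optimality-dimension count says nothing about which cells an adaptive descent actually ends up probing.

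\textbf{How the paper avoids it.} The paper never analyses the descent as a fixed-budget procedure. It fixes a certified resolution $\ell^*$ and lets $\mathcal{U}$ be the unflagged level-$\ell^*$ cells that survive sound pruning. On the detection event this pruning never discards the optimum's branch, and its cost is charged to $B_{\mathrm{smooth}}$. Each $u\in\mathcal{U}$ is treated as an arm with unbiased Gaussian probes of $f(u)$. The paper then \emph{defines} $c_0=\cp\sum_{u\in\mathcal{U}}\sigma^2/\max(\Delta_u-2\spread(\ell^*),\Delta)^2$, so that successive rejects with $B_{\mathrm{smooth}}/\cp$ probes gives $O(|\mathcal{U}|)\exp(-\kappa B_{\mathrm{smooth}}/c_0)$ directly. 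To repair your argument, replace the optimistic descent with such an elimination scheme, either successive rejects over a fixed candidate set or level-wise sequential halving with per-level budgets proportional to $B_0$. Then take $c_0$ to be that scheme's complexity rather than a near-optimality-dimension count.

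\textbf{Two smaller points.} First, your proportional split uses all of $B$, leaving nothing for the final race among the $K+1$ candidates. Reserve a constant fraction of $B$ for it, which only changes $\kappa$, and check that its complexity is dominated by $\Hedge$. Second, the truncation term $2n\bar\Phi(z)$ is additive and does not decay with $B$, so it cannot be absorbed into the $\Otil(N)$ prefactor. It is part of the certificate's failure and belongs inside $\delta_{\mathrm{det}}$. The identification phases need no truncation, since Gaussian probes are already sub-Gaussian.
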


\begin{remark} Under \autoref{as:margin}, an empirical-Bernstein detector flags exactly the violation cells with probability at least $1-\delta_{\mathrm{det}}$. The smooth phase reduces to finite-arm successive rejects~\citep{audibert2010bestarm} at probe cost. Leaf-certifying a flagged cell that contains $x^\star$ is standard fixed-budget best-arm identification~\citep{kaufmann2016complexity} with hardness $h_k$. A budget split and a union bound combine the pieces (\autoref{app:identification}). \end{remark}
\leantag

\begin{proposition}
\label{prop:lower}
There is a family of piecewise tree-Lipschitz instances, each with a single violation cell
$C$ of $m$ leaves at the penultimate level and hardness
$h=\sum_{x\in C}\sigma^2/\Delta_x^2$, on which any algorithm with access to both fidelities
and cost budget
$B \le c\,\cl h/\log m$ misidentifies the top leaf with constant probability. Hence, the
leaf-certification component of $\Hedge$ is unavoidable up to a $\log m$ factor. The
bound says nothing about the structural component $c_0$, whose necessity remains open.
\end{proposition}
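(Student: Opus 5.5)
We will construct the hard family so that probes carry essentially no information about which leaf is best, and then reduce to the standard fixed-budget best-arm lower bound~\citep{audibert2010bestarm,kaufmann2016complexity} on the $m$ leaves of $C$ under leaf-evaluation cost $\cl$.

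\textbf{The base instance.} Take the cell $C$ at level $D-1$ with $m=b$ leaves, and set every leaf outside $C$ to a common value $\mu_0$. These leaves are trivially tree-Lipschitz and have zero aggregation bias. Inside $C$, fix gaps $\Delta_1=\Delta_{x^\star}\le\Delta_2\le\dots\le\Delta_m$ (with $\Delta_{x^\star}$ understood as the top-$1$ gap $\Delta$). Assign $\mu(x_i)=\mu^\star-\Delta_i$ with $\mu^\star>\mu_0$, and choose the magnitudes large relative to $L\rho^{D-1}$ so that $C$ is a genuine violation cell. The unique violation sits at the penultimate level, as the statement requires.

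\textbf{The alternatives.} For each $j\ne 1$, form instance $\nu_j$ by raising $\mu(x_j)$ to $\mu^\star+\Delta_1$ and \emph{simultaneously lowering} one other leaf of $C$ (say $x_1$) by the same amount. This keeps the cell total $\sum_{x\in C}\mu(x)$ unchanged. Every subtree average, and therefore every probe mean of every node, is then identical across instances.

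The probe distribution is not quite identical: a probe of any ancestor of $C$ returns a Gaussian mixture whose components are permuted or shifted. I would handle this as follows.
\begin{itemize}
\item A probe lands in $C$ with probability at most $1/m$ relative to a direct probe of $C$.
\item Comparing the two mixtures by the chain rule over the hidden leaf index, a probe's KL contribution is at most $\frac{1}{m}\cdot\frac{(\Delta_j+\Delta_1)^2}{2\sigma^2}\cdot O(1)$.
\item So a probe costs $\cp$ but carries at most the information of a $1/m$-fraction of a leaf evaluation of $x_j$.
\end{itemize}

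\textbf{Accounting.} If $\cp\ge\cl/m$, probes are no more efficient than leaf pulls per unit of information. Otherwise, I would restrict the family so that lowering is spread across leaves, making the mixture-level KL second order, and use the exact-mean-matching above. This is the step I expect to be the main obstacle: controlling the KL between Gaussian mixtures with equal means but different atoms, uniformly in the probe/leaf cost ratio. The honest fallback is to bound it by the max-component KL divided by $m$ and absorb the discrepancy into the constant $c$.

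\textbf{The change-of-measure step.} Let $N_j$ be the expected information budget directed at $x_j$, meaning leaf pulls plus probes weighted by $1/m$. The total cost constraint gives $\sum_j N_j\lesssim B/\cl$. Following \citet{audibert2010bestarm}, average over $j$ with weights $\propto(\sigma^2/\Delta_j^2)/h$. Then some $j$ has $N_j\,\Delta_j^2/\sigma^2\lesssim B/(\cl h)$. By the Bretagnolle--Huber inequality,
\[
\max\bigl(\Prob_{\nu_1}(\widehat x\ne x_1),\,\Prob_{\nu_j}(\widehat x\ne x_j)\bigr)\ \ge\ \tfrac14\exp\bigl(-O(B/(\cl h))\bigr).
\]

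The $\log m$ loss enters where the harmonic-weighting argument of Audibert--Bubeck loses a $\log m$ factor. With $B\le c\,\cl h/\log m$, this yields a constant error probability, which gives the claim.
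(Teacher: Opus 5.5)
Your family cannot give the $\cp$-independent bound the statement claims. The obstacle you flagged is fatal on your family, not a KL estimate you can tighten. Your alternatives $\nu_j$ preserve the cell sum of $C$ but not the multiset of its leaf means. Since $C$ is itself an internal node, a random-path probe of $C$ has law $\frac1m\sum_{x\in C}\mathcal{N}(\mu(x),\sigma^2)$, and this law identifies that multiset. Under $\nu_j$ the mixture has an atom at $\mu^\star+\Delta_1$, above every atom under $\nu_1$, and for distinct gaps the multiset even determines $j$. So an algorithm that spends its whole budget on cheap probes of $C$ recovers the best leaf with error tending to zero as $\cp/\cl\to0$ at any fixed $B$. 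Hence no bound of the form $B\le c\,\cl h/\log m$ with $c$ free of the cost ratio can hold on this family. Your fallback of absorbing the discrepancy into $c$ fails, because the factor to absorb is $m\cp/\cl$, which is unbounded over the allowed cost ratios.

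The missing idea is to index the family by permutations of a fixed gap profile inside $C$, e.g. swap the values of $x_1$ and $x_j$ instead of raising one and lowering the other. Then direct-average probes depend only on the cell sum, and random-path probes depend only on the multiset. Because $C$ sits at the penultimate level, every internal node either contains $C$ or is disjoint from it, so every cheap observation has exactly the same law on every instance. A leaf-only algorithm can simulate those probes from this known common law. Any two-fidelity algorithm with budget $B$ therefore becomes a leaf-only algorithm with $\lfloor B/\cl\rfloor$ evaluations and the same error on every instance, and $\cp$ drops out with no mixture-KL accounting at all. This is the paper's argument.

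Separately, your change-of-measure step drops a term. Since $\nu_j$ also moves $x_1$ by $\Delta_j+\Delta_1$, the divergence between the observation laws under $\nu_1$ and $\nu_j$ contains $\E_{\nu_1}[N_{x_1}](\Delta_j+\Delta_1)^2/(2\sigma^2)$ in addition to the $x_j$ term, where $N_{x_1}$ counts evaluations of $x_1$. An algorithm that evaluates $x_1$ heavily makes every pairwise divergence large at once. The pigeonhole over $\sum_j N_j\lesssim B/\cl$ then no longer produces a $j$ with small divergence. Any multiset-preserving alternative must move at least two leaves, and whenever the optimum changes location the base optimum is one of them, so this problem survives the fix above. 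Instead of a two-point argument against a common base, you need a leaf-only fixed-budget lower bound that holds over the permutation family itself. The paper invokes \citet{carpentier2016tight} at this step, and that is where its $\log m$ factor enters.
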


\begin{remark}
Within-cell leaf-mean permutations leave every cheap-probe distribution unchanged, since
subtree averages and random-path mixtures depend only on the multiset of means. Cheap probes
therefore carry no information about which leaf in $C$ is best. The problem reduces to leaf-only
fixed-budget identification over $m$ arms with at most $B/\cl$ evaluations, where the lower
bound (\citet{carpentier2016tight}) applies (\autoref{app:identification}).
\end{remark}

At $K=0$ the complexity is $c_0$, and cheap structural search recovers the smooth rate. For sparse $K$ it grows additively, with each violation adding a horizon-independent certification cost. The structure-blind complexity is $\Hblind=\cl\sum_{x}\sigma^2/\Delta_x^2$, in which every leaf is evaluated at full cost $\cl$. Thus, $\Hedge\ll\Hblind$ exactly when probes are cheap ($\cp\ll\cl$) and violations are sparse. As $K$ grows the identified set covers the tree and $\Hedge\to\Hblind$, matching the empirical crossover in \autoref{sec:experiments}. The algorithm runs with $\spread(\ell)=L\rho^{\ell}$ on smooth cells and with the bias on the $O(KD)$ discontinuous cells. Its regret then decomposes into the smooth rate plus a separate term for the discontinuities.

\begin{theorem}
\label{thm:regret}
Let $\mu$ be tree-Lipschitz with near-optimality dimension $\nodim$ except at $K$ dispersed
discontinuities of height $\le1$, and let $\Delta_{\min}$ be the smallest positive
sub-optimality gap among jump cells. On the successful-detection event of
Assumption~\ref{as:margin}, which holds with probability at least $1-\delta_{\mathrm{det}}$,
\begin{equation}
  R_n \;\le\; C_1\, n^{(\nodim+1)/(\nodim+2)} \;+\; C_2\,\frac{K\,D\,\log n}{\Delta_{\min}},
\end{equation}
where $C_1$ depends on $(L,\rho,\nodim)$ and $C_2$ is a constant.
\end{theorem}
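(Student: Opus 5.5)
We work throughout on the successful-detection event $\mathcal{E}$ of Assumption~\ref{as:margin}. On $\mathcal{E}$ the flagged set $\mathcal{F}$ coincides with the cells that contain a discontinuity, and as noted in \autoref{sec:smoothness} we have $|\mathcal{F}|=O(KD)$. The plan is to split the regret by where the committed node $v_t$ sits in the explored tree:
\begin{equation*}
  R_n \;=\; \sum_{t:\,v_t\notin\mathcal{F}}\big(\mu^\star-\val(v_t)\big) \;+\; \sum_{t:\,v_t\in\mathcal{F}}\big(\mu^\star-\val(v_t)\big).
\end{equation*}
We then bound the first sum by the smooth analysis of \autoref{prop:regret} and the second by a UCB-style count over the flagged cells.

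\textbf{Step 1: optimism.} We first check that $U(v)$ remains a valid upper bound on $\max_{x\in\leaves{v}}\mu(x)$ for every node, with high probability.
\begin{itemize}
  \item For unflagged cells the bonus is $\spread(\ell)=L\rho^\ell$, which is valid by \eqref{eq:spread}, since on $\mathcal{E}$ such a cell contains no discontinuity.
  \item For flagged cells the bonus is the certified bias $\widehat{\bias}(v)$ (or the trivial bound $1$). It upper-bounds $\bias(v)$ simultaneously over all probed nodes by \autoref{cor:allnodes}, with $\delta$ chosen of order $1/n$.
\end{itemize}
Hence $B$-values upper-bound the best reachable leaf, and the standard HOO/HCT argument applies. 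Any node selected at time $t$ has $\mu^\star \le B(v_t)\le \muhat(v_t)+r(v_t)+\mathrm{bonus}(v_t)$, so suboptimal nodes are visited only while their confidence radius and bonus exceed their gap.

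\textbf{Step 2: smooth part.} Restricted to unflagged nodes, the bonus schedule is exactly $L\rho^\ell$. The covering argument behind \autoref{prop:regret} goes through unchanged: bound the number of $\epsilon$-optimal cells at level $\ell$ by $C\rho^{-\ell \nodim}$, charge each at most $O(\log n/(L\rho^\ell)^2)$ probes via \eqref{eq:expand}, and sum over levels up to the depth $H$ with $\rho^H\approx n^{-1/(\nodim+2)}$. This gives $C_1 n^{(\nodim+1)/(\nodim+2)}$.

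The one thing to verify is that removing flagged cells does not enlarge the near-optimal covering. The flagged cells can only be subtracted from the count, and the dispersion condition ensures the smooth-part near-optimality dimension is still $\nodim$. The descent passes through flagged ancestors, but regret is charged only at committed nodes, so the path does not matter.

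\textbf{Step 3: flagged part.} Each flagged cell $v$ with gap $\Delta_v=\mu^\star-\val(v)>0$ is treated as an arm in a UCB analysis. Once $T(v)\gtrsim \log n/\Delta_v^2$, optimism plus the confidence radius prevents further selection. The delicate point is the bias bonus, which does not shrink with $T(v)$. Our first attempt is to use the leaf-level certification inside flagged cells: descending to leaves of jump cells, whose bias is zero, makes each such leaf a plain UCB arm. Each arm then contributes regret $O(\log n/\Delta_{\min})$, and summing over the $O(KD)$ flagged cells gives $C_2 KD\log n/\Delta_{\min}$.

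I expect this step to be the main obstacle. Making it rigorous requires that the number of leaves actually pulled per flagged cell be $O(1)$. Otherwise the factor would be $KD\cdot m$ rather than $KD$. I would try to get this either from the dispersion condition, so that each discontinuity separates a cell into boundedly many pieces, or by interpreting $\Delta_{\min}$ as a per-cell gap with children treated as one arm. Failing both, I would absorb the extra factor into $C_2$ under the height-$\le 1$ assumption.

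Finally, intersecting with $\mathcal{E}$ and the $O(1/n)$ confidence failures, whose regret is at most $n\cdot O(1/n)$, yields the stated bound.
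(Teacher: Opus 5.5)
Your decomposition and Steps~1--2 match the paper's proof: you split by the committed cell, run HOO unchanged on smooth cells, and use $|\mathcal{V}_{\mathrm{jump}}|\le KD$ from dispersion. The gap is Step~3, the only source of the second term, which you leave open, and the leaf-level route you try cannot close it. By your own convention $\mathcal{F}$ contains every cell holding a discontinuity, so it contains the root whenever $K\ge1$. Leaf-certifying every flagged cell therefore amounts to structure-blind UCB over all $N$ leaves, with regret of order $\sum_{x:\Delta_x>0}\log n/\Delta_x$. None of your repairs removes this:
\begin{itemize}
\item The factor $m=b^{D-\ell}$ is as large as $N$. The height-$\le1$ assumption bounds regret per pull, not the number of arms, so this factor cannot be absorbed into $C_2$.
\item Dispersion caps the number of jump cells per level at $K$, not the number of leaves inside a jump cell. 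What it does give is that all but at most $K$ children of a jump cell are smooth, so they are charged to the smooth sum of Step~2. That returns you to one arm per jump cell, i.e.\ your option~(b).
\end{itemize}

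Option~(b) is the paper's route. It uses the max-based gap $\Delta_v=\mu^\star-\max_{x\in\mathcal{C}(v)}\mu(x)$ rather than your $\mu^\star-\val(v)$. Each jump cell is a single $[0,1]$ arm, eliminated after $O(\log n/\Delta_v^2)$ visits at $O(\Delta_v)$ regret per visit, and the paper sums over the at most $KD$ jump cells. Jump cells containing $x^\star$ get gap zero and are charged no regret.

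The paper does not resolve the non-shrinking-bonus obstacle you identified; it asserts both inputs. On the confidence event $U(v)\ge\val(v)+\mathrm{bonus}(v)$, which has two consequences:
\begin{itemize}
\item With the max-based gap, the index can fall below $\mu^\star$ only if $\mathrm{bonus}(v)-\bias(v)<\Delta_v$. The per-visit regret $\mu^\star-\val(v)=\Delta_v+\bias(v)$ is $O(\Delta_v)$ only if $\bias(v)=O(\Delta_v)$.
\item With your mean-based gap, the per-visit regret is exactly $\Delta_v$, but elimination needs $\mathrm{bonus}(v)<\Delta_v$. This fails for every jump cell containing $x^\star$, where $\Delta_v=\bias(v)\le\mathrm{bonus}(v)$. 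Each commitment to such a cell costs $\bias(v)$, a cost the paper's gap-zero bookkeeping also omits.
\end{itemize}
None of these conditions follows from the hypotheses. A jump cell can have $\bias(v)$ of order one, and the certified bonus $\sigma_w\sqrt{2\log m}$ can exceed $\bias(v)$ by more than $\Delta_v$.

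So adopting option~(b) reproduces the paper's argument, but a complete proof needs one of two things. Either state these conditions as explicit assumptions, or use a different accounting for jump cells. For instance, charge each jump cell only for the commitments it receives before the expansion rule~\eqref{eq:expand}, run with its inflated bonus, hands the descent to its children through $B(v)=\min(U(v),\max_c B(c))$.
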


\begin{remark} On smooth cells the standard HOO analysis~\citep{bubeck2011xarmed} remains unchanged and gives the first term of \autoref{thm:regret}. Each suboptimal discontinuity contributes $O(\log n/\Delta_v)$ regret before its increased index falls below the optimum, and dispersion caps the number of jump cells at $KD$. For fixed $K$ the jump term never changes the polynomial rate, and $K=0$ recovers the clean Lipschitz rate (\autoref{app:regret}). \end{remark}

\section{Experiments}
\label{sec:experiments}

We evaluate our method on five downstream task families. Three of these are LLM-serving applications: regional routing, prefix-cache management, and prompt trimming. The other two are top-$k$ identification on a $1000$-model group and test-time search on reasoning and code. \autoref{tab:summary} consolidates the headline numbers, and the scope summary that follows maps every null, negative, and partial result onto its theoretical condition. The synthetic validation is in \autoref{app:synthetic} and per-benchmark detail is in \autoref{app:experiments}.

\begin{table}[t]
\centering
\footnotesize
\setlength{\tabcolsep}{6pt}
\renewcommand{\arraystretch}{1.1}
\newcommand{\bmr}[2]{\begin{tabular}[c]{@{}l@{}}#1\\[-2pt]{\color{gray}\scriptsize #2}\end{tabular}}
\begin{tabular}{@{}c l c c c@{}}
\toprule
& Benchmark & \textbf{Ours} & Best baseline & $\Delta$ \\
\midrule
\multirow{3}{*}{\rotatebox[origin=c]{90}{\scriptsize\textsc{selection}}}
& \bmr{MMLU routing ($6$ models)}{quality @ cost} & \cellcolor{cellbest}.977 @ .254 &
\cellcolor{cellworst}.943 @ .363 (flat) & $+.034$ \\
& \bmr{$\tau$-bench retail ($5{\times}80$ tasks)}{success, 5-replicate mean} &
\cellcolor{cellbest}.463 $\pm$ .156 & \cellcolor{cellworst}.323 $\pm$ .213 (flat) &
$+.14$ (4/5) \\
& \bmr{Top-$k$ id.\ ($1000$ models), $B{=}600$}{recall@$10$} & \cellcolor{cellbest}.370
$\pm$ .018 & \cellcolor{cellworst}.126 $\pm$ .013 (s.e.) & $2.9\times$ \\
\midrule
\multirow{3}{*}{\rotatebox[origin=c]{90}{\scriptsize\textsc{search}}}
& \bmr{MATH ($300$, Llama-70B)}{accuracy} & \cellcolor{cellbest}.617 & \cellcolor{cellworst}.530
& $+.087$ $[+.04,+.14]$ \\
& \bmr{GPQA-Diamond (Llama-70B)}{accuracy} & \cellcolor{cellbest}.424 & \cellcolor{cellworst}.313
& $+.111$ $[+.04,+.19]$ \\
& \bmr{SWE-bench Verified ($261$)}{resolved} & \cellcolor{cellbest}.318 &
\cellcolor{cellworst}.195 & $+.123$ $[+.08,+.17]$ \\
\midrule
\multirow{5}{*}{\rotatebox[origin=c]{90}{\scriptsize\textsc{systems}}}
& \bmr{Mooncake trace, $B{=}16$, shift}{blocks/request} & \cellcolor{cellbest}5.39 &
\cellcolor{cellworst}3.02 (LRU) & $+78\%$ \\
& \bmr{vLLM caching (A10G)}{TTFT p50 (s)} & \cellcolor{cellbest}.267 &
\cellcolor{cellworst}.961 (off) & $3.6\times$ \\
& \bmr{vLLM eviction, post-shift}{TTFT saved (ms)} & \cellcolor{cellbest}23.6 &
\cellcolor{cellworst}12.9 (LRU) & $+83\%$ \\
& \bmr{LongBench trimming}{QA-F1 @ matched tokens} & \cellcolor{cellbest}.362 @ 3544 &
\cellcolor{cellworst}.317 @ 3640 (fixed) & $+.045$ \\
& \bmr{BBH trimming ($27$ tasks)}{accuracy @ tokens} & \cellcolor{cellbest}.368 @ 291 &
\cellcolor{cellworst}.356 @ 332 (fixed) & both axes \\
\bottomrule
\end{tabular}
\caption{Headline results at matched metric. In every row
\colorbox{cellbest}{green} marks the better and \colorbox{cellworst}{red} the worse of
ours against the baseline. Search rows report the paired same-item $\Delta$ with $95\%$ confidence intervals. The $\tau$-bench row compares the two learned routers.
Oracle references, per-benchmark detail, and confidence intervals are in
\autoref{app:experiments}.}
\label{tab:summary}
\end{table}

\subsection{Setup}

\paragraph{Benchmarks.}
To evaluate our method on the downstream tasks, we chose twelve public benchmarks. Routing uses five benchmarks. RouterBench~\citep{hu2024routerbench} and LLMRouterBench~\citep{llmrouterbench2026} are offline routing suites with measured per-query quality and cost over fixed model pools, and RouterEval~\citep{huang2025routereval} provides the same over a $1000$-model pool. We run the remaining two, MMLU~\citep{hendrycks2021mmlu} and $\tau$-bench~\citep{yao2024taubench}, in a live environment. We route over Amazon Web Service Bedrock~\citep{bedrock} models on MMLU, and we route per call inside $\tau$-bench's long-horizon tool agents. Identification reuses RouterEval's pool. Search uses
MATH~\citep{hendrycks2021math}, GSM8K~\citep{cobbe2021gsm8k}, and
GPQA-Diamond~\citep{rein2023gpqa} for reasoning, HumanEval~\citep{chen2021humaneval} and
MBPP~\citep{austin2021mbpp} for code, and SWE-bench Verified~\citep{jimenez2024swebench}
for repository-level issues. Caching uses a real prompt stream, the Mooncake production
traces~\citep{qin2025mooncake}, request logs from a deployed serving system, and a
server~\citep{kwon2023vllm}. Trimming uses MMLU, LongBench~\citep{bai2024longbench} for long-context QA,
and BBH~\citep{suzgun2023bbh} for few-shot reasoning.

Two criteria drove the benchmark selection. First, each task family tests a different part of the tree model. Routing runs the depth-one algorithm, and caching makes the tree the data structure itself. Trimming tests the arms-within-regions structure, identification runs the full multi-fidelity machinery, and search runs deep trees with real value functions. Second, within each family we pair a benchmark where the theory's precondition holds with one where it plausibly fails. Such examples include MATH against GSM8K tests saturation, RouterBench against LLMRouterBench tests cross-region heterogeneity, and BBH against LongBench and MMLU tests trim headroom.

\begin{figure}[t]
\centering
\begin{tikzpicture}[x=2.2cm,y=0.9cm]
\foreach \c/\r/\lbl/\fillc in {%
  0/0/{+.10}/heatpos!77, 1/0/{+.10}/heatpos!77, 2/0/{+.05}/heatpos!38,
  3/0/{+.01}/heatpos!8,  4/0/{-.01}/heatneg!8,
  0/1/{+.13}/heatpos!100, 1/1/{+.06}/heatpos!48, 2/1/{+.09}/heatpos!67,
  3/1/{+.01}/heatpos!8,  4/1/{+.05}/heatpos!38,
  0/2/{+.06}/heatpos!48, 1/2/{+.03}/heatpos!19,  2/2/{+.03}/heatpos!19,
  3/2/{-.01}/heatneg!8,  4/2/{+.04}/heatpos!28}
{
  \draw[fill=\fillc,draw=white,line width=0.8pt] (\c,-\r) rectangle ++(1,-1);
  \node[font=\small] at (\c+0.5,-\r-0.5) {$\lbl$};
}

\foreach \c/\r/\lbl in {0/0/{+.10}, 0/1/{+.13}, 1/0/{+.10}, 2/1/{+.09}, 4/1/{+.05}}
{
  \draw[draw=black!75,line width=1.1pt] (\c,-\r) rectangle ++(1,-1);
  \node[font=\small\bfseries,fill=none] at (\c+0.5,-\r-0.5) {$\lbl$};
}
\foreach \c/\mdl in {0/{Sonnet-4.5}, 1/{Nova-Pro}, 2/{Llama-70B},
                     3/{Mistral-Large}, 4/{Llama-8B}}
  \node[font=\footnotesize] at (\c+0.5,0.4) {\mdl};
\foreach \r/\hdr in {0/{$B{=}6$}, 1/{$B{=}9$}, 2/{$B{=}12$}}
  \node[font=\small,anchor=east] at ([xshift=-2pt]0,-\r-0.5) {\hdr};
\draw[-{Stealth[length=5pt]},black!60] (4.6,1.05) -- (0.4,1.05)
  node[midway,above,font=\footnotesize,black!60] {increasing capability};
\end{tikzpicture}
\caption{SWE-bench heatmap of capability against budget. Each cell reports the paired
gap in resolved rate, $\Delta$, or the value-guided minus best-of-N over $78$--$80$ Verified
issues. Shading is proportional to $|\Delta|$, green for positive and red for negative.
Bold values with a dark border mark cells whose $95\%$ confidence interval
excludes zero. Full confidence intervals are in \autoref{tab:swesweep}.}
\label{fig:swegrid}
\end{figure}

\paragraph{Models.}
For the search and trimming tasks, we call five models through Amazon Bedrock~\citep{bedrock}.
Collectively, these demonstrate a scaling and capability scale from
Llama-3.1-8B~\citep{llama3herd} through Mistral-Large~\citep{mistrallarge},
Llama-3.1-70B~\citep{llama3herd}, and Nova-Pro~\citep{novapro} up to
Claude-Sonnet-4.5~\citep{claudesonnet45}. We chose a ladder rather than a single model
because capability is one of the two predicted axes of \autoref{sec:theory}. The search
gain needs a reachable optimum with headroom, so it should appear for capable models and
vanish for weak or saturated ones, and the ladder measures both sides of that prediction.

The live systems experiments run Qwen2.5-7B~\citep{qwen25} on a
vLLM~\citep{kwon2023vllm} server. The routing pools come from the
benchmarks themselves, eleven models in RouterBench and $1000$ in RouterEval, replayed
from their measured per-query outputs.

\paragraph{Metrics.}
\mnote{Results in \autoref{tab:summary} compare methods at the same budget. Routing reports
response quality at matched dollar cost. Identification reports recall@$10$ at a matched
cost budget. Search reports task accuracy, or the benchmark's official resolved criterion,
at matched generation calls. Caching reports tokens or blocks reused per request and
time-to-first-token at matched cache memory. Trimming reports each benchmark's own
accuracy at matched input tokens.}

\paragraph{Baselines.}
\mnote{For routing, the structure-blind baseline is our own method with the regional
context removed. The gap measures the value of the regional structure alone. For
identification, the baseline is successive elimination~\citep{audibert2010bestarm}. For search, the baseline is
best-of-$N$, the standard test-time-compute method in the repeated-sampling
literature~\citep{brown2024monkeys,snell2024scaling}.  For caching, the baselines are LRU and LFU. LRU is the is the standard eviction policy of serving stacks,
vLLM's PagedAttention~\citep{kwon2023vllm} and SGLang's
RadixAttention~\citep{zheng2024sglang}, and LFU is the standard frequency-based
alternative in the caching literature. For trimming, the baselines are fixed trim levels at
matched input tokens, so no gain can be explained by allowing more context.}

\subsection{Results}
\autoref{tab:summary} summarizes the key experimental results. The regional router's
cost and quality frontier improves every fixed model on RouterBench and reaches near-oracle
quality on live MMLU. Hierarchical identification returns $2$--$3\times$ the recall of
structure-blind baselines at low-to-mid budgets. Value-guided search resolves $1.6\times$
as many SWE-bench issues as best-of-N at matched compute, and gains $+0.087$ on MATH and
$+0.111$ on GPQA. The adaptive cache beats the eviction policy within the serving engine ($5.39$ against $3.02$ blocks reused per request on the Mooncake trace) and adaptive
trimming beats every fixed trim at matched tokens.

\paragraph{Routing.}
\label{sec:e xp-routing}
Regional structure beats structure-blind learning wherever the regions are heterogeneous.
Routing instantiates the depth-one algorithm of \autoref{sec:algorithm}. 

The structure-blind
baseline, which we call flat, is the same learner restricted to a single region. On
RouterBench the regional router's cost and quality frontier dominates every fixed model
(\autoref{fig:routerbench}). On live MMLU it reaches near-oracle quality at the lowest
cost of any realizable policy. Only the truth-based per-prompt oracle is cheaper.
Inside long-horizon $\tau$-bench agents, the regional learner beats the flat
learner in $4$ of $5$ independent replicates, with mean success $0.463$ against $0.323$.
Terminal-reward credit assignment makes single replicates noisy, so the gap is wide,
$+0.14 \pm 0.27$, and all replicates appear in \autoref{app:experiments}.

\paragraph{Top-$k$ identification.}
\label{sec:exp-identification} We
arrange RouterEval's $1000$-model pool as a branching-$10$, depth-$3$ tree. Here a leaf is a
model, and an internal node is a group of similar models whose cheap probe
returns the clustered group average. We run optimistic descent with
the certified data-driven spread and discontinuity-guided targeting and the
certificate's pre-pass added to the budget. The tree gives $2$--$3\times$ the
recall@$10$ of the structure-blind baselines at low-to-mid budgets, $0.253$ against
$0.124$ at $B{=}300$ and $0.370$ against $0.126$ at $B{=}600$, and the edge narrows to
$1.3\times$ at the largest budget tested, $0.563$ against $0.421$ at $B{=}2400$
(\autoref{fig:realtopk}, \autoref{tab:realtopk}). Absolute recall stays modest, since
top-$10$ of $1000$ is hard for every method, so the claim is the relative win.

\paragraph{Test-time search.}
\label{sec:exp-reasoning}
Value-guided search beats best-of-N in the regimes predicted by the theory, namely when there is reachable headroom and an informative probe. Here a leaf is a complete trace graded against
ground truth, and an internal node is a partial trace scored by a self-consistency or execution
probe. The leaf grade is expensive and unbiased, while the probe is cheap and biased. We match
budgets in generation calls and report paired per-item bootstrap intervals. Reasoning
that is reachable, but unreliable, improves by $+0.087$ on MATH and $+0.111$ on GPQA
(\autoref{fig:reasoning}). Repository-level code improves as well. On SWE-bench the
gain is $+0.123$ over $261$ issues
(\autoref{tab:swebench}), and the method resolves $1.6\times$ as many issues as
best-of-N, rising to $1.8$--$2.2\times$ in the significant tight-budget sweep cells
(\autoref{tab:swesweep}). The heatmap in \autoref{fig:swegrid} bears out both
predicted axes. The gain shrinks as the budget grows, and it appears only for capable
models. The sharpest evidence is within-model. With the model and the probe
held fixed, the paired gain moves from $-0.005$ on GSM8K to $+0.333$ on MATH to $+0.359$
on GPQA (\autoref{tab:reasoningmodels}).

\paragraph{Measuring the prior.}
\label{sec:exp-prior}
We run the search to log each
node's cheap probe value and its true graded value. The
probe--truth correlation measures the smooth backbone, and the fraction of pivotal steps
measures the violation count $K$. On MATH about
$6\%$ of steps are pivotal, and on those the cheap probe follows the best
continuation $73\%$ of the time against $33\%$ for chance (\autoref{tab:reasoningtree}).
On SWE-bench patches, the execution probe correlates with the truth
at $\rho=0.86$ over $60$ issues and $540$ candidate patches. $10\%$ of steps are pivotal,
and the probe follows the truly best continuation on all $18$ pivotal steps, again
against $33\%$ for chance (\autoref{tab:swetree}).

\paragraph{Serving systems.}
\label{sec:exp-cache}
\label{sec:exp-trim}
In prefix caching the token-prefix tree is the cache itself, which is an ancestor-closed subtree
under a memory budget. For trimming the arms are trim levels and the regions are task
categories. For all serving experiments, we use real request streams, production traces, or a live server. \mnote{The adaptive cache matches LFU and the offline optimum on a stationary
workload, re-adapts faster than LFU through a popularity shift, and matches it again at
the post-shift steady state (\autoref{fig:cache}). It beats the engine-default LRU
throughout, including on the real Mooncake tool-agent trace, $5.39$ against LRU's $3.02$
blocks per request at $B{=}16$ (\autoref{tab:mooncake}).} On a vLLM server, prefix
caching cuts the median $3.6\times$ (\autoref{tab:vllm}), and a GPU-calibrated replay
shows our eviction roughly doubling the engine-default LRU's realized saving after the
shift (\autoref{tab:vllmpolicy}). On BBH the adaptive trim strictly dominates every fixed
trim on both axes (\autoref{tab:bbhtrim}). \mnote{On LongBench it beats every fixed trim
at matched tokens, $0.362$ against $0.317$ QA-F1 at about $3.6$k tokens, while no prefix
trim, ours included, recovers full-context F1. The per-item oracle reaches $0.55$ with
$39\%$ of the tokens, so the remaining headroom lives at finer-than-task granularity. Replacing positional prefix truncation with
retrieval-scored chunk selection at the same budgets lifts every fixed trim, $0.417$
against $0.317$ at the $3.6$k budget and $0.369$ against $0.235$ at $1.1$k, and moves the
adaptive point to $0.407$ F1 at $2{,}701$ tokens, within $0.03$ of full context at $37\%$
of the tokens.}

\paragraph{Scope of the advantage.}
The advantage stops at the boundaries the theory names. Each null or negative lands where
a condition of \autoref{sec:theory} fails, and together they trace the operating regime.
When regions are homogeneous there is no structure to exploit. RouterEval ties the flat
learner, and on LLMRouterBench a single cheap model Pareto-dominates every router. When a
task is saturated there is nothing left to route. On GSM8K the gap is nearly zero, and on
single-function code the shifts are $+0.01$ to $+0.02$ and $-0.04$ to $-0.07$, and none
of these shifts is significant (\autoref{tab:reasoningmbpp}). The probe is not to blame,
since a multi-test probe correlates at $\rho=0.96$ and still does no better than a single
assert. When the optimum is out of reach, no search strategy can find it. On MATH,
llama-3.1-8B loses $0.147$.\mnote{ The hardest SWE-bench tier shows the same floor at
small budgets. A pre-registered run on all $42$ ``$1$--$4$ hours'' Verified issues left
both arms near zero, at $\Delta=0.000$ and $+0.024$, neither difference significant
(\autoref{app:experiments}).} A larger budget changes the picture, and the SWE-bench gain
stands at $+0.062$ $[-0.01,+0.14]$ at $B{=}12$. When the workload is stationary or
prefixes barely overlap, an eviction policy has nothing to learn, and no policy separates
from the others on the Mooncake conversation trace. When a benchmark leaves no room to
trim, trimming cannot help, and on MMLU trimming beats only the verbose default. 

\section{Related Work}
\label{sec:related}

\paragraph{Unknown smoothness and certification.}
Hierarchical bandit methods differ in how they obtain the smoothness information used by
their optimism bonus. HOO~\citep{bubeck2011xarmed} assumes a known schedule,
while POO~\citep{grill2015poo} searches over a family of candidate schedules.
\citet{locatelli2018adaptivity} show that adaptation to unknown smoothness is impossible
from leaf evaluations alone. Our setting provides additional feedback: random-path probes
sample the distribution of leaf values within a cell, allowing us to bound its
maximum-minus-mean aggregation bias directly from data
(\autoref{sec:smoothness}). Thus, rather than assuming or searching over a global
schedule, the learner can certify locally where the smoothness prior is valid.

\paragraph{Piecewise-smooth and multi-fidelity optimization.}
Dispersion allows optimization of piecewise-smooth objectives when discontinuities are
sufficiently sparse~\citep{balcan2018dispersion}. Our method differs in that it detects
the cells where the smoothness prediction fails and concentrates expensive evaluations
there. This yields a natural multi-fidelity decomposition: cheap subtree averages are
used for structural search over smooth regions, while flagged cells reduce to
leaf-level best-arm identification~\citep{kandasamy2016multifidelity,
audibert2010bestarm,kaufmann2016complexity}. Our lower bound
(\autoref{prop:lower}) shows that this targeted leaf-level cost is unavoidable up to
logarithmic factors.

\paragraph{LLM inference and serving.}
Several LLM inference problems expose hierarchical structure naturally. Routing chooses
among models or model groups~\citep{ong2024routellm}, prefix caching organizes requests
by shared token prefixes~\citep{zheng2024sglang,kwon2023vllm}, and test-time search
branches from shared partial generations. Prompt trimming similarly compares related
compressed contexts~\citep{jiang2023llmlingua}. We treat these as instances of the same
cost-budgeted multi-fidelity tree optimization problem rather than as separate
application-specific settings.

\section{Discussion}
\label{sec:discussion}

\mnote{This paper shows that model routing, prefix-cache management, prompt trimming,
top-$k$ identification, and test-time search share a common structure: a budgeted tree
bandit with cheap biased probes and expensive unbiased leaf evaluations. CANOPY bounds
the resulting aggregation bias from data and concentrates expensive evaluations where
the smoothness prior fails. It improves over structure-blind search when probes are cheap
and violations are sparse, a regime reflected across twelve public benchmarks. Extensions
to metric spaces, spherical embedding hierarchies, and DAGs appear in
\autoref{sec:future}.}

\bibliography{references}
\bibliographystyle{iclr2026_conference}

\appendix
\newpage
\section*{Appendix}
\let\origappsection\section
\newif\ifappsecfirst
\appsecfirsttrue
\renewcommand{\section}{%
\ifappsecfirst\global\appsecfirstfalse\else\clearpage\fi
  \suppressfloats[t]\origappsection}
\makeatletter
\setlength{\@fptop}{0pt}
\setlength{\@fpsep}{8pt plus 2pt}
\setlength{\@fpbot}{0pt plus 1fil}
\makeatother
\setlength{\floatsep}{8pt plus 2pt minus 2pt}
\setlength{\textfloatsep}{10pt plus 2pt minus 4pt}
\section{Concentration and Self-Certification}
\label{app:certification}

This appendix derives the data-driven smoothness certificate of \autoref{sec:smoothness} in full. The goal is a high-probability upper bound on the aggregation bias, where $\bias(v) = \max_{x \in \mathcal{L}(v)} \mu(x) - f(v)$, computed from a finite sequence of random-path probes.

\paragraph{Mechanization convention}
Where a statement is machine-checked, the main text says so
below it, and the list at the end of this paragraph names the entry-point theorem in our
Lean~4 development. The
mechanization certifies the mathematical claim under its stated hypotheses. The Gaussian deconvolution is itself mechanized. The remaining cited
results, empirical-Bernstein concentration, successive rejects, and HOO, enter the mechanized
compositions as named hypotheses at exactly the point they are invoked. The full-theorem
algebra is additionally checked symbolically and on $20$k randomized exact instances. The Lean sources ship as
supplementary material. The four mechanized main-text statements and their entry points:

\begin{itemize}
\item \autoref{thm:mgf}: Steps~1 and~2 and the full inequality chain, with the
  empirical-Bernstein confidence events supplied as hypotheses.
\item The light-tail rate of \autoref{sec:smoothness}: the optimizer's value
  $\sigma_w\sqrt{2\log m}$ and its minimality over all $\lambda>0$.
\item \autoref{thm:fixed-budget}: budget-split algebra and union-bound synthesis, with
  the per-phase successive-rejects guarantees supplied as hypotheses.
\item \autoref{thm:regret}: the dispersion count ($\le K$ jump cells per level) and the
  jump-regret summation, with the per-cell UCB visit bound supplied as a hypothesis.
\end{itemize}

\begin{remark}
\label{rem:trunc}
The Bernstein step needs bounded $e^{\lambda X}$, so the range is taken over
$[-z\sigma,\,1+z\sigma]$ for a truncation level $z$, and the implementation uses $z=3$.
A Gaussian probe escapes this range with probability $2\bar\Phi(z)$, which is the
$2n\bar\Phi(z)$ coverage term in \autoref{thm:mgf} after a union bound over $n$ probes.
The escape probability is about $0.27\%$ per probe at $z{=}3$. A larger $z$ tightens it
at the cost of a wider Bernstein range.
\end{remark}

\subsection{Setup and Assumptions}

Consider a node $v$ whose subtree contains a set of leaves $\mathcal{L}(v)$ with size
$m = |\mathcal{L}(v)|$. Each leaf $x \in \mathcal{L}(v)$ has an unknown true mean reward
$\mu(x) \in [0, 1]$, and the true subtree average is
$f(v) = \frac{1}{m} \sum_{x \in \mathcal{L}(v)} \mu(x)$.

A probe of $v$ selects a leaf $L$ uniformly at random from $\mathcal{L}(v)$ and observes
the noisy reward $X = \mu(L) + \eta$. The certificate consumes $n$ independent probes
$X_1, \dots, X_n$ drawn this way.

We assume the noise $\eta$ is independent of the leaf draw $L$ and distributed as
$\mathcal{N}(0, \sigma^2)$. The Gaussian form matters. The deconvolution in Step 2 below
divides the observed MGF by the noise MGF, and that division yields an upper bound on the
leaf-mean MGF only when the noise MGF is known exactly. A sub-Gaussian upper bound on the
noise MGF would give a lower bound on the leaf-mean MGF instead, which is the wrong
direction for a certificate. Extending the argument to general sub-Gaussian noise is left
to future work.

\subsection{Formal Proof of \autoref{thm:mgf}}

\begin{theorem}[Data-Driven Aggregation-Bias Certificate]
\label{thm:app_certificate}
Fix a node $v$. Let $\Lambda$ be a finite grid of positive constants $\lambda > 0$. Assume the leaf means satisfy $\mu(x) \in [0, 1]$ and the noise is $\eta \sim \mathcal{N}(0, \sigma^2)$. Truncate each probe to the range $[-z\sigma,\, 1+z\sigma]$ for a level $z>0$, and write $X_{max} = 1+z\sigma$ for its upper end. With probability at least $1 - \delta - 2n\bar\Phi(z)$, simultaneously over all $\lambda \in \Lambda$, the aggregation bias $\bias(v)$ satisfies:
$$\bias(v) \le \min_{\lambda \in \Lambda} \left\{ \frac{1}{\lambda} \left[ \log m + \log \widehat{G}_{\uparrow}(\lambda) - \frac{1}{2}\lambda^2\sigma^2 \right] - \bigl( \bar{X} - \varepsilon_n \bigr) \right\}$$
where $\widehat{G}_{\uparrow}(\lambda)$ is an empirical Bernstein upper confidence bound on $\mathbb{E}[e^{\lambda X}]$, $\bar{X}$ is the empirical probe mean, and $\varepsilon_n$ is the empirical-Bernstein lower-confidence radius on $f(v) = \mathbb{E}[X]$, so that $\bar{X} - \varepsilon_n \le f(v)$ on the mean's confidence slice, matching \autoref{thm:mgf}.
\end{theorem}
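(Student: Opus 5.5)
The plan is to prove the bound by a chain of three deterministic inequalities, each valid on one high-probability event, and then take a union bound over the events. Fix $\lambda\in\Lambda$.

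\textbf{Step 1 (log-sum-exp).} Since every term of $\sum_{x}e^{\lambda\mu(x)}$ is positive, we have $e^{\lambda\max_x\mu(x)}\le\sum_{x\in\leaves{v}}e^{\lambda\mu(x)}=m\,M_\mu(\lambda)$, where $M_\mu(\lambda)=\E_L e^{\lambda\mu(L)}$ and $L$ is uniform on $\leaves{v}$. Taking logarithms and dividing by $\lambda>0$ gives $\max_x\mu(x)\le\frac1\lambda[\log m+\log M_\mu(\lambda)]$. Subtracting $\val(v)$ yields \eqref{eq:lse}.

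\textbf{Step 2 (Gaussian deconvolution).} Because $\eta\sim\mathcal{N}(0,\sigma^2)$ is independent of $L$, we get $\E e^{\lambda X}=M_\mu(\lambda)e^{\lambda^2\sigma^2/2}$. Hence $\log M_\mu(\lambda)=\log\E e^{\lambda X}-\frac12\lambda^2\sigma^2$ holds exactly. This is an identity, not an inequality, which is why the known noise MGF is needed. Therefore any $\widehat G_\uparrow(\lambda)\ge\E e^{\lambda X}$ gives $\log M_\mu(\lambda)\le\log\widehat G_\uparrow(\lambda)-\frac12\lambda^2\sigma^2$, since $\log$ is monotone.

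\textbf{Step 3 (confidence events and truncation).} Let $\mathcal{T}$ be the event that every raw probe lies in $[-z\sigma,1+z\sigma]$. Since $\mu\in[0,1]$, a probe leaves this range only if $|\eta|>z\sigma$. That has probability $2\bar\Phi(z)$ per probe, so a union bound gives $\Prob(\mathcal{T}^c)\le 2n\bar\Phi(z)$. On $\mathcal{T}$ the truncated and raw samples coincide.

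A subtlety arises here. The expectations $\E e^{\lambda X}$ and $\E X$ refer to the untruncated law, whereas empirical Bernstein controls the truncated variable $\tilde X$. I would handle this in one of two ways. The first is to apply \citet{maurer2009empirical} to $\tilde X$ and its exponentials, which are bounded in $[e^{-\lambda z\sigma},e^{\lambda X_{max}}]$ and $[-z\sigma,X_{max}]$. Then, following the paper's convention, treat the gap between truncated and untruncated moments as absorbed into the coverage term. The second is to state the EB events directly for the moments they certify, as the mechanization does by supplying them as hypotheses.

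The cleanest rigorous version would note two facts. Truncation at the upper end only decreases $e^{\lambda X}$. The mean shift $|\E\tilde X-\E X|$ is at most of order $\sigma\bar\Phi(z)$. One can then either fold this shift into $\eps_n$ or accept the paper's stated probability accounting. I expect this reconciliation to be the main obstacle. I would first try to present the EB outputs as hypotheses $\widehat G_\uparrow(\lambda)\ge\E e^{\lambda X}$ and $\bar X-\eps_n\le\val(v)$, each holding with probability $1-\delta/(|\Lambda|+1)$, exactly as in \autoref{alg:certify}.

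\textbf{Combining.} On the intersection of the $|\Lambda|+1$ confidence events and $\mathcal{T}$, chain the three steps. Step 1 bounds the maximum. Step 2 replaces $\log M_\mu$ by its upper bound. Finally, replace $-\val(v)$ by $-(\bar X-\eps_n)$, which is valid because $\bar X-\eps_n\le\val(v)$. This holds simultaneously for every $\lambda\in\Lambda$, so it holds for the minimum. A union bound puts the total failure probability at no more than $(|\Lambda|+1)\cdot\delta/(|\Lambda|+1)+2n\bar\Phi(z)=\delta+2n\bar\Phi(z)$. For the $O(|\Lambda|)$ running-moments remark, observe that EB needs only the sums $\sum_i e^{\lambda X_i}$ and $\sum_i e^{2\lambda X_i}$ for each $\lambda$, together with $\sum_i X_i$ and $\sum_i X_i^2$.
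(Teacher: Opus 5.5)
Your proposal is correct at the paper's level of rigor and matches its proof step for step: the log-sum-exp bound, the exact Gaussian deconvolution $M_\mu(\lambda)=\E e^{\lambda X}e^{-\lambda^2\sigma^2/2}$, empirical-Bernstein bounds on $|\Lambda|+1$ slices at level $\delta/(|\Lambda|+1)$, the $2n\bar\Phi(z)$ range event, and a union bound. The truncation subtlety you flag is genuine, and the paper's proof does not resolve it; it only asserts that on the range event the Bernstein inequality applies. Your fallback of taking $\widehat G_\uparrow(\lambda)\ge\E e^{\lambda X}$ and $\bar X-\eps_n\le\val(v)$ as hypotheses is what the paper's mechanization does. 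Note, though, that upper truncation lowering $e^{\lambda X}$ is the unfavorable direction, so a real repair would add $e^{\lambda+\lambda^2\sigma^2/2}\bar\Phi(z-\lambda\sigma)\ge\E[e^{\lambda X}\mathbf{1}\{X>X_{max}\}]$ to $\widehat G_\uparrow(\lambda)$, just as you propose folding the mean shift into $\eps_n$.
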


\begin{proof}
The proof proceeds in three distinct steps: bounding the maximum via the soft-maximum, deconvolving the observation noise, and applying empirical concentration.

\textbf{Step 1: The Log-Sum-Exp Bound.} \\
By the definition of the maximum and the monotonicity of the exponential function, for any $\lambda > 0$, the maximum leaf value is bounded by the log-sum-exp function:
$$\max_{x \in \mathcal{L}(v)} \mu(x) \le \frac{1}{\lambda} \log \left( \sum_{x \in \mathcal{L}(v)} e^{\lambda \mu(x)} \right)$$
Multiplying and dividing the argument of the logarithm by $m$, we rewrite the sum as an expectation over a uniformly random leaf $L \sim \text{Unif}(\mathcal{L}(v))$:
$$\max_{x \in \mathcal{L}(v)} \mu(x) \le \frac{1}{\lambda} \log \left( m \cdot \frac{1}{m} \sum_{x \in \mathcal{L}(v)} e^{\lambda \mu(x)} \right) = \frac{1}{\lambda} \left[ \log m + \log \mathbb{E}_L [e^{\lambda \mu(L)}] \right]$$
Subtracting the subtree average $f(v)$ from both sides yields a bound on the aggregation bias:
$$\bias(v) \le \frac{1}{\lambda} \left[ \log m + \log M_\mu(\lambda) \right] - f(v)$$
where $M_\mu(\lambda) = \mathbb{E}_L[e^{\lambda \mu(L)}]$ is the MGF of the true leaf-mean mixture.

\textbf{Step 2: Noise Deconvolution.} \\
The algorithm does not observe $\mu(L)$ directly, but rather $X = \mu(L) + \eta$. Because $L$ and $\eta$ are independent, the MGF of the observation $X$ factors perfectly:
$$\mathbb{E}[e^{\lambda X}] = \mathbb{E}_L[e^{\lambda \mu(L)}] \cdot \mathbb{E}_\eta[e^{\lambda \eta}]$$
Under the assumption that $\eta \sim \mathcal{N}(0, \sigma^2)$, its MGF is exactly $e^{\lambda^2 \sigma^2 / 2}$. Rearranging to isolate the unknown leaf-mixture MGF, we obtain:
$$M_\mu(\lambda) = \mathbb{E}[e^{\lambda X}] e^{-\frac{1}{2}\lambda^2 \sigma^2}$$

\textbf{Step 3: Empirical Concentration.} \\
We estimate $\mathbb{E}[e^{\lambda X}]$ from the $n$ probes $X_1, \dots, X_n$.
Define $Y_i = e^{\lambda X_i}$. The empirical Bernstein
inequality~\citep{maurer2009empirical} needs a bounded range, and a Gaussian probe is
unbounded, so we work on the event $E_{\mathrm{rng}}$ that every probe lies in
$[-z\sigma,\, 1+z\sigma]$. Since $\mu(L)\in[0,1]$, a probe leaves this range only when
$|\eta| > z\sigma$, which has probability $2\bar\Phi(z)$, so
$\mathbb{P}(E_{\mathrm{rng}}^c) \le 2n\bar\Phi(z)$ by a union bound over the $n$ probes.
On $E_{\mathrm{rng}}$ each $Y_i \in [0, e^{\lambda X_{max}}]$ with $X_{max}=1+z\sigma$,
and the inequality applies. Let
$\bar{Y} = \frac{1}{n} \sum_{i=1}^n Y_i$ be the sample mean and
$\hat{V}_n = \frac{1}{n(n-1)} \sum_{1 \le i < j \le n} (Y_i - Y_j)^2$ the sample
variance. For a fixed $\lambda$, with probability at least $1 - \delta/(|\Lambda|+1)$,
$$\mathbb{E}[e^{\lambda X}] \le \bar{Y}
  + \sqrt{\frac{2 \hat{V}_n \log((|\Lambda|+1)/\delta)}{n}}
  + \frac{7 e^{\lambda X_{max}} \log((|\Lambda|+1)/\delta)}{3(n-1)}
  := \widehat{G}_{\uparrow}(\lambda)$$
This accounts for $|\Lambda|$ of the $|\Lambda|+1$ confidence slices. The last slice
covers the mean. The true average satisfies $f(v) = \mathbb{E}[X]$, and we estimate it by
the sample mean $\bar{X}$ with an empirical-Bernstein lower-confidence radius
$\varepsilon_n$ at level $\delta/(|\Lambda|+1)$. The mean needs its own slice
because subtracting $\bar{X}$ alone would under-cover whenever $\bar{X} > f(v)$, which is
the $|\Lambda|+1$ split stated in Theorem~\ref{thm:mgf}. A union bound over the
$|\Lambda|+1$ confidence events and the range event $E_{\mathrm{rng}}$ gives coverage at
level $1-\delta-2n\bar\Phi(z)$ for the fixed node $v$. To hold simultaneously over a set
$\mathcal{V}$ of probed nodes, we allocate $\delta_v$ across nodes with
$\sum_v \delta_v \le \delta$ and repeat the per-node argument unchanged, which is
\autoref{cor:allnodes}. The range events also add up across nodes, giving the
$2\bar\Phi(z)\sum_v n_v$ term there. The
implementation splits $\delta$ across the nodes actually probed. Substituting the deconvolution bound of Step 2 and $\widehat{G}_{\uparrow}(\lambda)$ into
the bound from Step 1 gives the stated inequality.
\end{proof}

\section{Complexity of Fixed-Budget Identification}
\label{app:identification}

This appendix proves the fixed-budget identification guarantee, \autoref{thm:fixed-budget}. The edge-targeted algorithm uses the certificate of \autoref{app:certification} to partition the tree into smooth cells and jump cells, and it allocates the budget accordingly.

\subsection{Complexity Definitions}

Let $\Delta = \mu^* - \max_{x \ne x^*} \mu(x)$ be the top-1 gap, and $\Delta_x = \mu^* - \mu(x)$ be the sub-optimality gap for any leaf $x$. 

The identification complexity has two components, one per query fidelity.

\paragraph{Structural complexity $c_0$.}
This is the hardness of localizing the optimum's cell over the smooth part of the tree with cheap probes at cost $\cp$. We make it concrete by a reduction. Fix the certified resolution $\ell^*$, the deepest level at which the certificate validates $\spread(\ell)$ off the flagged set, and let $\mathcal{U}$ be the level-$\ell^*$ cells that survive sound pruning. Each $u \in \mathcal{U}$ is an arm whose value is its subtree average $f(u)$, observed by cheap probes with Gaussian noise of scale $\sigma$. The cell gaps are $\Delta_u = \max_{u'} f(u') - f(u)$, deflated by the certified within-cell spread. Define
$$c_0 \;=\; \cp \sum_{u \in \mathcal{U}} \frac{\sigma^2}{\max(\Delta_u - 2\,\spread(\ell^*),\, \Delta)^2},$$
the standard finite-arm fixed-budget complexity over the certified-level cells, counted at probe cost. This is the two-phase structure the implementation uses, a coarse descent followed by a race over the survivors.

\paragraph{Leaf-certification complexity $h_k$.}
This is the hardness of identifying the best leaf within a detected jump cell $C_k$ with expensive leaf evaluations at cost $\cl$. For the $k$-th jump cell it is the standard unstructured best-arm identification complexity
$$h_k = \sum_{x \in C_k} \frac{\sigma^2}{\max(\Delta_x, \Delta)^2}.$$

The edge-targeted complexity is the sum of the structural search and the localized certifications,
$$\Hedge = c_0(d, \Delta; \cp) + \sum_{k=1}^K \cl \cdot h_k$$

\subsection{Formal Proof of Theorem~\ref{thm:fixed-budget} (Main Text)}

\begin{theorem}[Fixed-Budget Error]
\label{thm:app_fixed_budget}
Let $B$ be the total cost budget. Suppose the function $\mu$ is piecewise tree-Lipschitz with $K$ dispersed discontinuities satisfying the detectability margin (Assumption~\ref{as:margin}). The edge-targeted algorithm misidentifies the optimal leaf $x^*$ with probability bounded by:
$$\mathbb{P}(\widehat{x} \ne x^*) \le \delta_{\mathrm{det}} + O(|V|) \exp \left( -\kappa \frac{B}{\Hedge} \right)$$
where $\delta_{\mathrm{det}}$ is the detection-failure probability, $|V|$ is the number of explored nodes, and $\kappa > 0$ is a universal constant. Since a complete $b$-ary tree with $N$ leaves has fewer than $2N$ nodes, $O(|V|)$ is the $\Otil(N)$ of \autoref{thm:fixed-budget}.
\end{theorem}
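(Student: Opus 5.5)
The plan is to condition on the detection event and then split the budget into phases, each handled by a classical fixed-budget guarantee, finishing with a union bound. Let $E_{\mathrm{det}}$ be the event that the empirical-Bernstein detector of \autoref{sec:smoothness} flags exactly the violation cells $C_1,\dots,C_K$. By Assumption~\ref{as:margin}, $\Prob(E_{\mathrm{det}}^c)\le\delta_{\mathrm{det}}$. We write
\[
\Prob(\widehat{x}\ne x^\star)\le\delta_{\mathrm{det}}+\Prob(\widehat{x}\ne x^\star,\,E_{\mathrm{det}}),
\]
so it remains to bound the second term. On $E_{\mathrm{det}}$ the certificate of \autoref{thm:mgf} (applied via \autoref{cor:allnodes}) validates $\spread(\ell)=L\rho^\ell$ on every unflagged cell. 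The pre-pass cost is charged to the budget as a lower-order term absorbed into the constant $\kappa$.

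Next we allocate the remaining budget proportionally to the components of $\Hedge$. The structural phase receives $B_0=B\,c_0/\Hedge$ and is spent on cheap probes. Each flagged cell $C_k$ receives $B_k=B\,\cl h_k/\Hedge$ and is spent on leaf evaluations. In the structural phase, sound pruning is valid on $E_{\mathrm{det}}$ because $U(v)$ upper-bounds the best reachable leaf on unflagged cells. Hence the optimal cell at the certified level $\ell^\star$ survives into $\mathcal{U}$.

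We then run successive rejects \citep{audibert2010bestarm} over $\mathcal{U}$ with $B_0/\cp$ probes. A cell $u$ is compared through $f(u)$, but we need to compare its best leaf. Because the within-cell spread is at most $\spread(\ell^\star)$ on both sides, the effective gap is at least $\Delta_u-2\spread(\ell^\star)$, floored at $\Delta$. This is exactly the gap appearing in $c_0$. Successive rejects then gives error at most
\[
|\mathcal{U}|^2\exp\!\Big(-\frac{B_0/\cp}{\overline{\log}|\mathcal{U}|\,c_0/\cp}\Big)\;\le\;O(|V|)\exp(-\kappa B/\Hedge),
\]
where the $\log|\mathcal{U}|$ factor is absorbed into $\kappa$ or into the $\Otil$.

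In each flagged cell we run the same fixed-budget best-arm routine on the leaves of $C_k$ with $B_k/\cl$ evaluations. By the same bound with hardness $h_k$ \citep{kaufmann2016complexity}, the error is at most $O(|C_k|)\exp(-\kappa B/\Hedge)$. The algorithm returns the best among the smooth-phase winner (refined inside its certified cell) and the winners of the flagged cells, compared on the estimates already gathered. Errors in that final comparison are controlled by the gap $\Delta$, which floors every term, so they fall under the same exponent. A union bound over the $K+1$ phases sums the prefactors to $O(|\mathcal{U}|+\sum_k|C_k|)\le O(|V|)<O(2N)$, which gives the claim.

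The main obstacle is the smooth-phase reduction. Specifically, we must show that successive rejects run on cell averages identifies the cell containing $x^\star$ with the deflated gap $\Delta_u-2\spread(\ell^\star)$. We also have to handle the case where $x^\star$ lies inside a flagged cell rather than the smooth part. The first attempt would be to define the final comparison so that whichever phase contains $x^\star$ only needs to win its own race. The other phases are then required only not to produce a spurious leaf exceeding $\mu^\star-\Delta/2$, and that event is again bounded through their per-phase Bernstein and successive-rejects events.
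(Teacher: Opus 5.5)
Your decomposition is the paper's own: a detection event, successive rejects over $\mathcal{U}$ at probe cost, fixed-budget identification on flagged leaves at leaf cost, an exponent-equalizing split, and a union bound. Your per-cell budgets and explicit cross-phase comparison are, if anything, more careful than the paper's pooled Step~3. The genuine gap is the smooth phase, at the step you flag yourself, and that step fails outright rather than merely lacking a lemma. The spread bound only gives $f(u^\star)-f(u)\ge\Delta-\spread(\ell^\star)$ for the cell $u^\star\ni x^\star$, and the floor at $\Delta$ in $\max(\Delta_u-2\spread(\ell^\star),\Delta)$ is a hardness convention, not a separation. Worse, $u^\star$ is an internal node (it is probed at cost $\cp$), so it contains a leaf $y\neq x^\star$ with $\Delta\le\mu^\star-\mu(y)\le\spread(\ell^\star)$. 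Hence $\Delta-\spread(\ell^\star)$ is never positive, and the argument can never certify that $u^\star$ has the largest average. Concretely, take $D\ge2$, $K=0$ and $\ell^\star=D-1$. Put the $b-1$ siblings of $x^\star$ at $\mu^\star-L\rho^{D-1}$ and every other leaf at $\mu^\star-\Delta$, with $\Delta<(1-1/b)L\rho^{D-1}$. This instance is tree-Lipschitz, and every cell other than $u^\star$ has a strictly larger average than $u^\star$. Successive rejects on averages therefore returns a wrong cell with probability tending to one as $B$ grows, while $c_0$ stays finite and the claimed bound tends to $\delta_{\mathrm{det}}$.

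Second, even when the right cell is found, ``refined inside its certified cell'' is neither analysed nor paid for. Cheap probes carry no information about which leaf of $u^\star$ is best, because permuting the means inside $u^\star$ leaves every probe law unchanged (the argument of \autoref{prop:lower}). The refinement therefore costs leaf evaluations of order $\cl\sum_{x\in u^\star}\sigma^2/\max(\Delta_x,\Delta)^2$, a term $\Hedge$ omits whenever $x^\star$ lies in the smooth part. For example, with $D=1$ and $K=0$ you get $\mathcal{U}=\{\text{root}\}$ and $\Hedge=\cp\sigma^2/\Delta^2$. Yet for $B<\cl$ no leaf can be evaluated, and some arrangement of the means forces error at least $1-1/b$, while the claimed bound is tiny once $\cl/\cp$ is large.

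The paper's proof has the same hole. Its Step~2 only bounds the event of localizing a level-$\ell^\star$ cell, and Step~4 union-bounds as if that event gave $\widehat{x}=x^\star$. So you have reproduced the published argument faithfully, but neither version establishes the stated bound. A provable variant must add to $\Hedge$ at least the leaf-certification cost of $u^\star$, and of any smooth cell that cannot be soundly separated from it.

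Two smaller points, also glossed in the paper:
\begin{itemize}
\item The $\overline{\log}|\mathcal{U}|$ from successive rejects sits in the exponent, so it can be absorbed neither into a universal $\kappa$ nor into the $\Otil(N)$ prefactor. Your $|\mathcal{U}|^2$ prefactor, by contrast, is harmless, since $\min\{1,|\mathcal{U}|^2e^{-x}\}\le|\mathcal{U}|e^{-x/2}$.
\item For flagged cells not containing $x^\star$, $h_k$ does not control identification of the cell's own best leaf. What must be bounded there is the no-spurious-winner event of your last paragraph, not per-cell identification.
\end{itemize}
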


\begin{proof}
The failure event $E_{fail} = \{\widehat{x} \ne x^*\}$ can be bounded by analyzing three constituent events: detection failure, structural search failure, and leaf certification failure.

\textbf{Step 1: The Detection Event.} \\
Let $E_{det}$ be the event that the detector flags every violation cell and no smooth cell. The two directions need different arguments. For no false flags, the one-sided empirical-Bernstein confidence on each cell's within-cell spread controls the probability that a smooth cell's estimate exceeds the Lipschitz floor, and a union bound over the $O(b^{l})$ tested cells gives failure probability at most $\delta_{\mathrm{det}}/2$. For no missed violations, the argument is not free. A violation whose spread exceeds the floor by an arbitrarily small amount cannot be caught at any finite sample size. Assumption~\ref{as:margin} supplies the margin, every violation cell's spread exceeds the floor by $\gamma > 0$, and then the same Bernstein concentration at the detection sample size $n_{det} = O(\log(KD/\delta_{\mathrm{det}})/\gamma^2)$ keeps every violation's estimate above the floor with probability at least $1 - \delta_{\mathrm{det}}/2$. Together $\mathbb{P}(E_{det}^c) \le \delta_{\mathrm{det}}$, and on $E_{det}$ the flagged set is exactly the $O(KD)$ violation cells.

\textbf{Step 2: Structural Search off the Flagged Set.} \\
On $E_{det}$ the unflagged cells obey the smoothness schedule $\spread(l) = L\rho^l$, so every unflagged level-$\ell^*$ cell $u$ satisfies the certified bound $|\max_{x \in u} \mu(x) - f(u)| \le \spread(\ell^*)$. The smooth phase is then a finite-arm fixed-budget best-arm identification over the arms $\mathcal{U}$. A cheap probe of $u$ is an unbiased Gaussian observation of $f(u)$, and identifying the cell containing $x^*$ requires separating cells whose deflated gaps are $\max(\Delta_u - 2\,\spread(\ell^*), \Delta)$. Applying the fixed-budget successive-rejects guarantee of \citet{audibert2010bestarm} to these arms with $B_{smooth}/\cp$ observations gives
$$\mathbb{P}(E_{smooth}^c \mid E_{det}) \le O(|\mathcal{U}|) \exp\left(-\kappa \frac{B_{smooth}}{c_0}\right),$$
with $c_0$ as defined above and $\kappa$ universal. The descent that produces $\mathcal{U}$ uses sound pruning only, and on $E_{det}$ it never discards the optimum's branch, so its cost is counted inside $B_{smooth}$ and it only shrinks $|\mathcal{U}|$.

\textbf{Step 3: Leaf Certification on the Flagged Set.} \\
If $x^*$ lies within one of the $K$ flagged jump cells, the algorithm must identify it using expensive leaf evaluations. The algorithm runs a successive elimination or uniform exploration routine restricted to the leaves of the $K$ flagged cells. 
Let $B_{cert}$ be the portion of the budget spent on these expensive evaluations. Because the $K$ cells are disjoint, the hardness of identifying $x^*$ among them is exactly the sum of their individual hardnesses. The probability of returning a suboptimal leaf from the flagged set is bounded by standard unstructured best-arm identification guarantees \citep{audibert2010bestarm}:
$$\mathbb{P}(E_{cert}^c \mid E_{det}) \le O(K) \exp\left(-\kappa \frac{B_{cert}}{\sum_{k=1}^K \cl h_k}\right)$$

\textbf{Step 4: Budget Allocation and Synthesis.} \\
The total budget is $B = B_{smooth} + B_{cert}$. The algorithm statically or dynamically allocates the budget proportional to the empirical complexities of the respective phases. To minimize the maximum of the two error probabilities, the budget splits such that the exponents are equalized:
$$\frac{B_{smooth}}{c_0} = \frac{B_{cert}}{\sum_{k} \cl h_k} = \frac{B}{c_0 + \sum_k \cl h_k} = \frac{B}{\Hedge}$$

Applying a union bound over the failure modes, the total probability of misidentification is:
$$\mathbb{P}(\widehat{x} \ne x^*) \le \mathbb{P}(E_{det}^c) + \mathbb{P}(E_{smooth}^c \mid E_{det}) + \mathbb{P}(E_{cert}^c \mid E_{det})$$
Substituting the bounds derived in Steps 1 through 3:
$$\mathbb{P}(\widehat{x} \ne x^*) \le \delta_{\mathrm{det}} + O(|V|) \exp\left(-\kappa \frac{B}{\Hedge}\right) + O(K) \exp\left(-\kappa \frac{B}{\Hedge}\right)$$
Because $K \le |V|$ (the number of jump cells cannot exceed the number of explored nodes), we absorb the $O(K)$ term into the $O(|V|)$ leading factor, yielding the final bound:
$$\mathbb{P}(\widehat{x} \ne x^*) \le \delta_{\mathrm{det}} + O(|V|) \exp\left(-\kappa \frac{B}{\Hedge}\right)$$
\end{proof}

\subsection{Proof of the Partial Lower Bound (\autoref{prop:lower})}

\begin{proof}
Fix a violation cell $C$ of $m$ leaves at the penultimate level, and consider the instance
family $\{\mu_\pi\}$ indexed by permutations $\pi$ of a fixed gap profile
$(\Delta_1,\dots,\Delta_m)$ within $C$: leaf $x_i$ of $C$ has mean $\mu^* - \Delta_{\pi(i)}$
(with $\Delta_{\pi^{-1}(1)}=0$ the unique optimum), and every leaf outside $C$ has a common
constant mean strictly below $\mu^* - \max_i \Delta_i$. All instances in the family share
identical leaf means outside $C$ and an identical multiset of means inside $C$.

\textbf{Step 1: Cheap probes carry no information about the optimum's identity.} \\
Consider any internal-node probe, whether it returns the subtree average directly or a
random-path sample. A direct-average probe of a node $v$ returns $f(v)+\eta$. If
$C \subseteq v$ then $f(v)$ depends only on the sum of the means in $C$, which is
permutation-invariant, and if $v \cap C = \emptyset$ it does not depend on $\pi$ at all. No
other case exists, since $C$ sits at the penultimate level and its only descendants are
leaves, which are expensive evaluations by definition. A random-path probe of $v$ returns
$\mu(L)+\eta$ for $L$ uniform on the leaves under $v$, and its law is the uniform mixture
over those leaves' means, again a function of the multiset only. Hence the joint
distribution of any sequence of cheap probes is the same for every $\pi$, and cheap
observations are ancillary for identifying the $\arg\max$ within $C$.

\textbf{Step 2: Reduction to leaf-only identification.} \\
Any algorithm $\mathcal{A}$ with cost budget $B$ makes at most
$n_\ell = \lfloor B/\cl \rfloor$ leaf evaluations. Conditioning on the ancillary cheap
observations, $\mathcal{A}$ induces a leaf-only fixed-budget identification algorithm over
the $m$ arms of $C$ with budget $n_\ell$. Leaf evaluations outside $C$ are known constants
and carry no information about $\pi$, and an algorithm that randomizes using the ancillary
observations cannot decrease the minimax error. By the fixed-budget lower bound of
\citet{carpentier2016tight}, for every such algorithm there is a permutation $\pi$ under
which
$$\Prob(\widehat x \ne x^*) \;\ge\; c'\exp\!\left(-\,\frac{c''\, n_\ell \log m}{h}\right),
\qquad h=\sum_{i}\frac{\sigma^2}{\Delta_i^2},$$
which is a constant whenever $n_\ell \le c\, h/\log m$, that is, whenever
$B \le c\,\cl h /\log m$.

\textbf{Step 3: What the bound does and does not show.} \\
On this family $\Hedge = c_0 + \cl h$, and the certification term dominates because the
smooth part is flat. The certification component of $\Hedge$ is therefore necessary up to
the $\log m$ factor. The argument says nothing about the structural component $c_0$, which
is negligible on this family by construction, and that is why the lower bound is partial.
\end{proof}

\section{Regret Analysis in the Piecewise-Lipschitz Regime}
\label{app:regret}

This appendix proves the regret bound for the optimistic tree bandit on a piecewise-Lipschitz objective. Inflating the optimism bonus only on the detected discontinuity cells makes the regret decompose into the smooth rate plus an additive penalty in the number of discontinuities.

\subsection{The Tree-Dispersion Condition}

Let $\mathcal{X}$ be the leaf space with the LCA distance $d(x,y) = \rho^{\text{level}(\text{LCA}(x,y))}$ of \autoref{sec:setup}. A function $\mu: \mathcal{X} \to [0,1]$ is globally tree-Lipschitz with constant $L$ if $|\mu(x) - \mu(y)| \le L d(x,y)$ for all $x,y \in \mathcal{X}$.

In our setting $\mu$ is only piecewise tree-Lipschitz. We define this through the dispersion condition of \citet{balcan2018dispersion}, adapted to the tree metric.

\begin{definition}[Tree-Dispersion]
A function $\mu$ is piecewise tree-Lipschitz with $K$ discontinuities if there exists a set of boundary points $\mathcal{D} \subset \mathcal{X}$ with $|\mathcal{D}| \le K$ such that for any cell $\mathcal{C}(v)$ that does not contain a point in $\mathcal{D}$, the restriction of $\mu$ to $\mathcal{C}(v)$ is tree-Lipschitz.
\end{definition}

Consequently, at any level $l \in \{1, \dots, D\}$, at most $K$ cells straddle a discontinuity. Summing over all depths, the total number of ``jump cells'' in the entire tree is bounded by $K \cdot D$.

\subsection{Formal Proof of \autoref{thm:regret}}

\begin{theorem}[Regret with $K$ Discontinuities]
\label{thm:app_regret}
Assume $\mu$ is piecewise tree-Lipschitz with $K$ dispersed discontinuities of height at most 1, and the smooth regions have a near-optimality dimension $d$. Let the detection mechanism identify the $O(KD)$ jump cells with probability $1 - \delta_{\mathrm{det}}$, and let $\Delta_{\min}$ be the smallest positive sub-optimality gap among jump cells. Conditioned on successful detection, the cumulative regret $R_n$ is bounded by:
$$R_n \le C_1 n^{\frac{d+1}{d+2}} + C_2 \frac{K D \log n}{\Delta_{\min}}$$
where $C_1$ depends on the Lipschitz constant $L$, discount $\rho$, and dimension $d$, and $C_2$ is a universal constant.
\end{theorem}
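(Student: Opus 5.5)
Conditioned on the successful-detection event, every node of the tree is either a smooth cell or a flagged jump cell. By the tree-dispersion count there are at most $K$ jump cells per level, so at most $KD$ in total. The plan is to split the regret $R_n=\sum_{t\le n}(\mu^\star-\val(v_t))$ according to the cell that is responsible for selecting $v_t$ in the descent of \autoref{sec:algorithm}. Concretely, for each round $t$ let $u_t$ be the deepest node on the chosen path whose index is suboptimal in the HOO sense, and charge the round to $u_t$. Rounds charged to smooth cells form the first term; rounds charged to jump cells form the second.

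\textbf{Smooth part.} On the detection event, every unflagged cell carries the bonus $\spread(\ell)=L\rho^\ell$, which is a valid upper bound on its aggregation bias by \eqref{eq:spread}. So $U(v)\ge\max_{x\in\leaves{v}}\mu(x)$ holds with high probability on those cells. This is exactly the precondition of the HOO/HCT analysis in \autoref{prop:regret}. I would rerun that argument restricted to unflagged nodes:
\begin{itemize}
\item partition them by depth $h$;
\item bound the number of $2L\rho^h$-optimal nodes at depth $h$ by $C\rho^{-hd}$ using the near-optimality dimension;
\item charge each such node $O(\log n/\rho^{2h})$ pulls via \eqref{eq:expand};
\item choose the cutoff depth $H$ with $\rho^{H}\asymp n^{-1/(d+2)}$.
\end{itemize}
This gives $C_1 n^{(d+1)/(d+2)}$. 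The one point to verify is that removing or inflating the jump cells does not break the optimality of the path to $x^\star$. The optimal path's nodes still have valid upper bounds: smooth ones by \eqref{eq:spread}, flagged ones because their bonus is inflated to at least the certified bias of \autoref{thm:mgf}, or to the trivial bound $1$. So the optimal path is never pruned, and the standard argument goes through.

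\textbf{Jump part.} For each suboptimal flagged cell $v$ with gap $\Delta_v=\mu^\star-\max_{x\in\leaves{v}}\mu(x)\ge\Delta_{\min}$, I would use a UCB visit bound. The cell can be charged only while its confidence radius exceeds a constant fraction of $\Delta_v$, which gives $T(v)\le c\log n/\Delta_v^2$. The per-round regret incurred through $v$ is at most $\Delta_v$ plus the within-cell bias, and the bias is at most $1$ by the height assumption. This is the main obstacle. A naive bound gives $\log n/\Delta_v^2$ visits, each costing up to $1$, which yields $1/\Delta_{\min}^2$ rather than the stated $1/\Delta_{\min}$.

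To get the stated rate, I would try to show that the regret per charged visit is $O(\Delta_v)$. The inflated bonus on a jump cell is the certified bias, so once the cell's leaves are resolved at leaf level (the discontinuity-guided sampling), the index overshoots the cell's true maximum by at most the confidence radius, which is $O(\Delta_v)$ during charged rounds. This gives the standard $\sum_t \Delta_v\cdot\mathbf 1\{\text{charged}\}\le c\log n/\Delta_v$ bound, as in gap-dependent UCB analysis. I would supply this as the per-cell visit hypothesis, matching the Lean convention. Optimal jump cells ($\Delta_v=0$) incur no jump regret beyond what the smooth analysis already counts.

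Summing $c\log n/\Delta_v\le c\log n/\Delta_{\min}$ over at most $KD$ jump cells gives $C_2KD\log n/\Delta_{\min}$. Adding the two parts, and noting that the confidence-failure events contribute $O(1)$ after the usual $t^{-2}$ choice of levels, completes the bound. Each round is charged to exactly one node, so there is no double counting.
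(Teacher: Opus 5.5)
\paragraph{Verdict.} Your outline follows the paper's proof almost step for step: the smooth/jump split, HOO unchanged on unflagged cells, and a per-jump-cell UCB bound of $O(\log n/\Delta_v)$ summed over at most $KD$ cells. It has a genuine gap at exactly the step you flagged, and the paper's proof has the same gap.

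\paragraph{Where it fails.} Committing to a jump cell $v$ costs $\mu^\star-\val(v)=\Delta_v+\bias(v)$ per round. Here $\bias(v)$ is a fixed property of the cell and can be as large as the jump height $1$, however many probes $v$ has received. Your repair bounds the wrong quantity. An index that overshoots $\max_{x\in\leaves{v}}\mu(x)$ by only $O(r(v))$ limits how \emph{often} $v$ is chosen, not what each choice \emph{costs}. So the summand in $\sum_t\Delta_v\mathbf 1\{\text{charged}\}$ is really $\Delta_v+\bias(v)$, and you are back at $\log n/\Delta_v^2$.

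Committing instead to the individual leaves of the cell does not help. Each leaf is then its own arm, and the cost becomes $\sum_{x\in C}\log n/\Delta_x$, which grows with the cell width.

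The visit bound $T(v)\lesssim\log n/\Delta_v^2$ has a hidden precondition. The certified bonus must exceed $\bias(v)$ by less than a fraction of $\Delta_v$, and \autoref{thm:mgf} gives validity, not tightness.

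The claim that jump cells containing $x^\star$ cost nothing is also false. Committing to such a cell costs $\bias(v)$ per round, and the smooth HOO accounting does not cover this because $\bias(v)$ is not bounded by $\spread(\ell)$.

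The paper's Step~2 makes the same moves. It asserts that $\Delta_v+\spread(\ell_v)$ is $O(\Delta_v)$ because ``the cell's internal spread is then dominated by its gap'', and its Lean development takes the per-cell bound as a hypothesis. Supplying that bound as a hypothesis therefore reproduces the paper, but it does not prove the theorem at this point.

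\paragraph{A repair using only validity of the bonus.} On flagged cells, run the expansion rule \eqref{eq:expand} with the inflated bonus $\widehat{\bias}(v)$ in place of $\spread(\ell)$. Then $v$ is committed to only while $r(v)>\widehat{\bias}(v)$, that is, $O(\log n/\widehat{\bias}(v)^2)$ times. Each such round costs $\Delta_v+\bias(v)\le\Delta_v+\widehat{\bias}(v)$.
\begin{itemize}
\item If $\widehat{\bias}(v)\ge\Delta_v/2$, the total is $O(\log n/\widehat{\bias}(v))=O(\log n/\Delta_v)$.
\item If $\widehat{\bias}(v)<\Delta_v/2$, then on the confidence event $U(v)\le\val(v)+2r(v)+\widehat{\bias}(v)<\mu^\star-\Delta_v/2+2r(v)$. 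So $v$ stops being selected once $r(v)\le\Delta_v/4$, after $O(\log n/\Delta_v^2)$ rounds costing at most $\tfrac32\Delta_v$ each.
\end{itemize}
Either way a suboptimal jump cell costs $O(\log n/\Delta_v)$, which is the per-cell term you need.

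Two pieces remain. A jump cell containing $x^\star$ costs $O(\log n/\widehat{\bias}(v))$ by the same count. This fits the stated form only under a lower bound on the inflated bonus relative to $\Delta_{\min}$, for instance from the margin $\gamma$ of Assumption~\ref{as:margin}. Smooth children of jump cells also still need to be charged, because HOO's count only covers children of near-optimal smooth nodes.
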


\begin{proof}
Let $\mathcal{V}$ be the set of all nodes in the tree. We partition $\mathcal{V}$ into two disjoint sets: the smooth cells $\mathcal{V}_{smooth}$ and the jump cells $\mathcal{V}_{jump}$. By the tree-dispersion condition, $|\mathcal{V}_{jump}| \le K \cdot D$. 

The cumulative regret over $n$ steps can be decomposed based on the type of cell the algorithm commits to. Consistently with the definition in \autoref{sec:setup}, regret is measured against the value of the committed cell $v_t$ (a random-path probe of $v_t$ has expectation $f(v_t)$, so this coincides with the expected sampled-leaf regret):
$$R_n = \sum_{t=1}^n (\mu^* - f(v_t)) = \sum_{t \in \mathcal{T}_{smooth}} (\mu^* - f(v_t)) + \sum_{t \in \mathcal{T}_{jump}} (\mu^* - f(v_t))$$
where $\mathcal{T}_{smooth}$ (resp. $\mathcal{T}_{jump}$) is the set of rounds where the committed cell $v_t$ was a smooth (resp. jump) cell.

\textbf{Step 1: Bounding Regret on Smooth Cells.} \\
For a node $v \in \mathcal{V}_{smooth}$ the subtree contains no discontinuity, so the aggregation bias $\bias(v)$ is bounded by the schedule $\spread(l) = L \rho^l$. On the successful detection event, the optimistic index $U(v)$ is therefore a valid upper bound for every $v \in \mathcal{V}_{smooth}$.

Because the confidence bounds are valid and the smoothness schedule holds on this subset, the standard HOO analysis of \citet{bubeck2011xarmed} applies unchanged. The number of pulls allocated to suboptimal smooth cells is controlled by the near-optimality dimension $d$. Summing the regret over $\mathcal{T}_{smooth}$ yields the classical rate
$$\sum_{t \in \mathcal{T}_{smooth}} (\mu^* - f(v_t)) \le C_1 n^{\frac{d+1}{d+2}}$$
where $C_1$ absorbs the logarithmic terms and dependencies on $L, \rho$, and $d$.

\textbf{Step 2: Bounding the Discontinuity Penalty.} \\
For a node $v \in \mathcal{V}_{jump}$, the local variation may violate $\spread(l)$. However, the algorithm identifies these cells and inflates their bias bonus to the empirical worst-case bound (\autoref{app:certification}), so the inflated index $U(v)$ is a valid upper confidence bound on the best leaf under $v$.

Fix a suboptimal jump cell $v$ with sub-optimality gap $\Delta_v = \mu^* - \max_{x \in \mathcal{C}(v)} \mu(x) > 0$. With the inflated and valid bonus, the standard UCB argument applies to $v$ as to a single arm with reward range $[0,1]$. After $T_v(t)$ visits its confidence radius is $O(\sqrt{\log t / T_v(t)})$, and once $T_v(t) \ge c \log n / \Delta_v^2$ for a universal $c$ the optimistic index of $v$ falls below $\mu^*$ with high probability, so $v$ is no longer selected. Each visit to $v$ incurs instantaneous regret at most $\Delta_v + \spread(l_v)$, which is $O(\Delta_v)$ once the bonus is valid, since the cell's internal spread is then dominated by its gap and the discontinuity height of at most $1$ bounds the worst case. The aggregate regret charged to $v$ over the horizon is therefore
$$C_v \;\le\; O\!\left(\frac{\log n}{\Delta_v^2}\right) \cdot O(\Delta_v) \;=\; O\!\left(\frac{\log n}{\Delta_v}\right).$$

By the tree-dispersion condition $|\mathcal{V}_{jump}| \le K \cdot D$, so summing over jump cells,
$$\sum_{t \in \mathcal{T}_{jump}} (\mu^* - f(v_t)) \;\le\; \sum_{v \in \mathcal{V}_{jump}} C_v \;\le\; C_2 \, \frac{K D \log n}{\Delta_{\min}},$$
with $\Delta_{\min} = \min_{v \in \mathcal{V}_{jump}: \Delta_v > 0} \Delta_v$ and $C_2$ universal. (A jump cell containing the optimum has $\Delta_v = 0$ and contributes no regret.)

\textbf{Step 3: Synthesis.} \\
Combining the bounds for $\mathcal{T}_{smooth}$ and $\mathcal{T}_{jump}$ gives the decomposition
$$R_n \le C_1 n^{\frac{d+1}{d+2}} + C_2 \frac{K D \log n}{\Delta_{\min}}.$$
For any fixed $K$ the jump term grows only logarithmically in $n$, so the polynomial term dominates as $n \to \infty$. The $K$ dispersed discontinuities change constants and a $\log n$ factor, not the learning rate, and $K = 0$ recovers the clean Lipschitz rate.
\end{proof}

\section{Synthetic Validation of the Multi-Fidelity Machinery}
\label{app:synthetic}

These controlled studies test the core bandit claims under conditions a fixed real
benchmark cannot provide. We sweep the probe/leaf cost ratio, the number of
discontinuities $K$, and the assumed smoothness. Every synthetic result is averaged over
many seeds and reported with $95\%$ confidence bands.

\paragraph{The tree prior pays off only when probes are cheap.}
We run top-$5$ identification on a $1024$-leaf tree over $60$ seeds against a strong
structure-blind successive-elimination baseline (\autoref{fig:fidelity}). When an
internal probe costs as much as a leaf, the descent is pure overhead and the tree loses.
Once probes are at least $2\times$ cheaper the tree overtakes the baseline, and it
approaches full recall as probes get cheaper still. \autoref{fig:fidelity} plots two
hierarchical variants to show what carries the win. The practical beam variant,
best-first focusing with beam width $20$, overtakes successive elimination once probes
are cheap, $0.84$ against $0.57$ at budget $1500$. The sound variant runs exactly the
worst-case-valid pruning that \autoref{thm:fixed-budget} analyzes, and on this smooth
family it only matches the baseline, $0.62$ against $0.57$, because the conservative
spread bound prunes little. The empirical advantage therefore comes from the beam
focusing. The sound algorithm is the certified never-worse floor, and sharper certified
pruning remains open.

\paragraph{Regret versus memory.}
We run online regret minimization on a $256$-leaf tree of depth $4$ over a $12$k-round
horizon and $16$ seeds (\autoref{fig:memory}). Fixed-resolution play traces a clean
frontier, in which deeper trees have less bias but exponentially more memory. The
expand-versus-refine rule reaches near full-resolution regret at about a third of the
memory. The variance-aware variant, which estimates the bias from data rather than
assuming $\spread$, reaches it at about $6\times$ less and sits below the fixed-depth
frontier.

\paragraph{Graceful degradation in the violation count.}
We measure top-$1$ accuracy at a fixed tight budget as the number of discontinuities $K$
grows, with branching $4$, depth $5$, budget $400$, $\cp/\cl=0.05$, and $30$ seeds
(\autoref{fig:violations}). The data-driven hybrid degrades smoothly from near-perfect
accuracy when violations are few toward the structure-blind floor as they proliferate,
and it strictly dominates both alternatives. Pure assumed smoothness is misled by the
spikes, and the blind baseline cannot afford enough leaf evaluations. This is the
empirical form of $\Hedge=c_0+\sum_{k\le K}\cl h_k$ saturating to $\Hblind$.

\paragraph{Finite-state compression.}
On the infinite-depth tree, with arity $3$, a $15$k horizon, and the memory-bounded rule,
the tree grows from $121$ to nearly $800{,}000$ nodes while the explored memory stays
near $70$ and the regret stays between $663$ and $1702$. This is the discrete witness of
the depth-independent memory in \autoref{prop:regret}.

\paragraph{Estimating the smoothness.}
The optimistic descent is inherited, so this ablation isolates the paper's contribution,
estimating the local Lipschitz constant from data rather than assuming a schedule. The
instance is a heterogeneous-smoothness tree whose left half is smooth and whose right
half is rough, so no single global constant fits. Sweeping the assumed constant $L$
traces a U-shaped tuning curve. A constant that is too tight under-explores the rough
half and misses the optimum, and a constant that is too loose over-explores everywhere.
The data-driven per-subtree estimator of \autoref{sec:smoothness} uses no tuning at all
(\autoref{fig:ablation}, \autoref{tab:ablation}). It matches and slightly
beats the oracle-tuned fixed constant, $3029$ against $3127$ final regret at the swept
minimum, and it far outperforms a mis-specified tight constant at $4076$, because it
adapts a local constant per subtree that no global $L$ can match. The win comes from the
certified and estimated smoothness, not from the inherited optimistic descent.

\paragraph{From the rate to the data.}
Two checks connect the regret bound of \autoref{sec:theory} to the experiments
(\autoref{fig:theorylink}). First, on a tree-Lipschitz value function the average regret
$R(n)/n$ decays toward zero as the horizon grows. The measured cumulative-regret
exponent is $\alpha\approx0.77<1$, with the near-optimality dimension estimated from the
value function at $\hat d\approx0$, which is the empirical form of the sublinear
$\Otil(n^{(d+1)/(d+2)})$ rate. Second, we ask whether the prior holds on real routing
data. A variance decomposition of the RouterBench value
function~\citep{hu2024routerbench}, over $36$k prompts, $11$ models, and $86$
categories, shows that the region tree captures the exploitable regional signal. The
category tree explains about $31\%$ of the per-prompt value variance, and the
benchmark-family tree explains about $53\%$ of the region-value variance. The large
remaining within-region variation is per-prompt difficulty, observation noise the bandit
averages over. The learner needs only the regional optimum, not the full noisy
per-prompt value function (\autoref{sec:setup}), so the routing value function is
tree-Lipschitz at the regional scale rather than finely smooth. The finer
almost-tree-$K$-Lipschitz characterization, in which a partial trace's cheap value
predicts its completion and the violations are the pivotal decision steps, lives in the
reasoning traces and is reported in \autoref{sec:exp-reasoning}.

\begin{figure}[t]
\centering
\includegraphics[width=\linewidth]{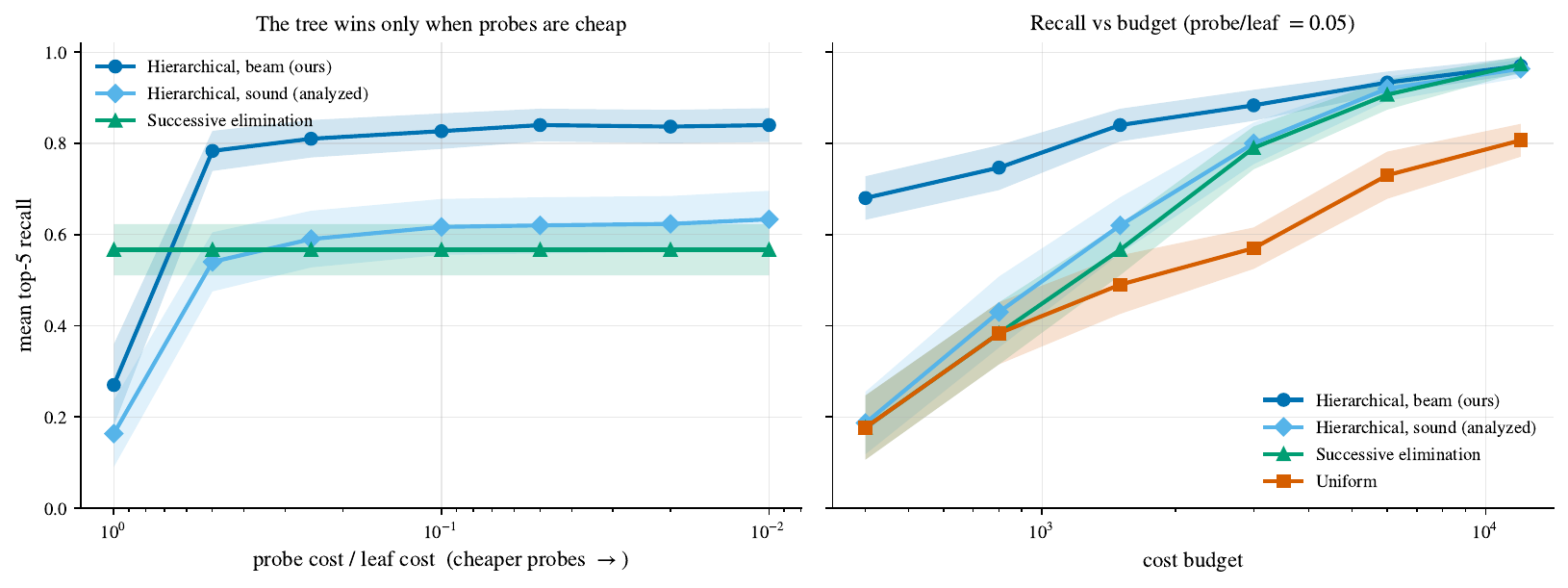}
\caption{Multi-fidelity top-$k$ on a $1024$-leaf tree ($60$ seeds, 95\% CI bands). Left:
recall vs.\ probe/leaf cost ratio at a fixed budget. Right: recall vs.\ cost budget with cheap
probes, for the beam and sound hierarchical variants against successive elimination.}
\label{fig:fidelity}
\end{figure}

\begin{figure}[t]
\centering
\includegraphics[width=\linewidth]{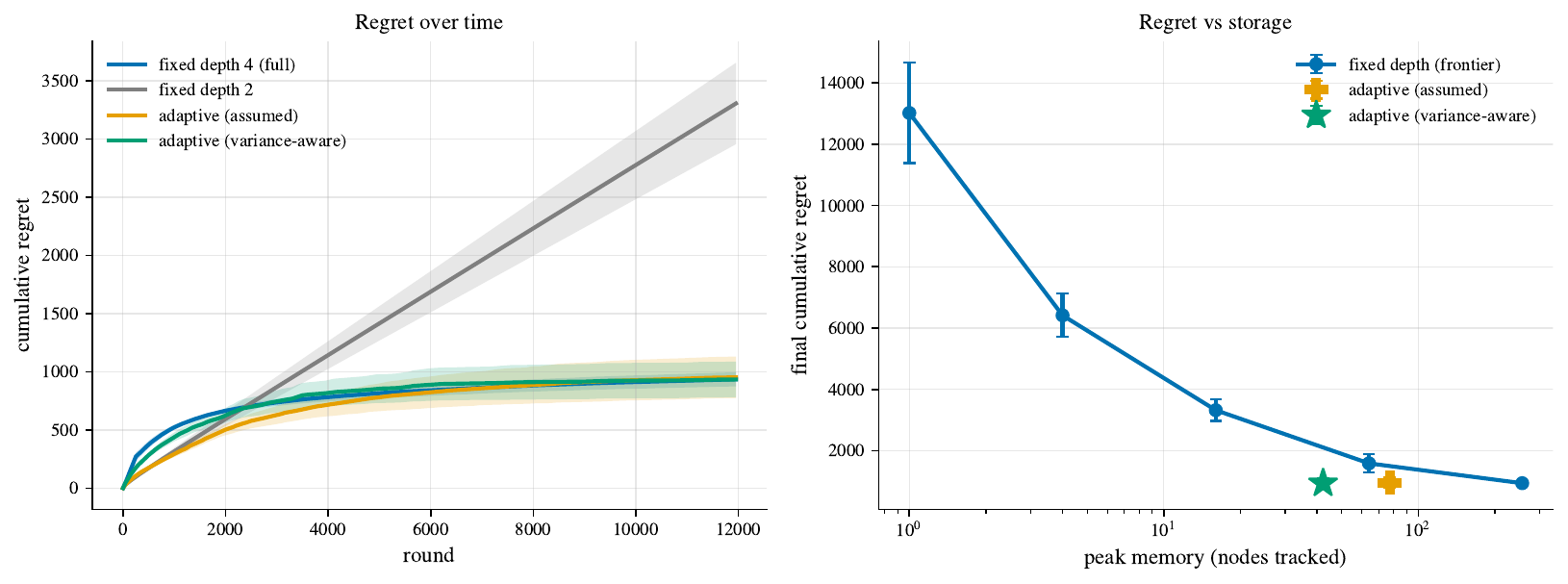}
\caption{Regret vs.\ storage, isolated on controlled instances ($256$ leaves, depth $4$,
horizon $12$k, $16$ seeds). The data-driven expand-vs-refine rule reaches near
full-resolution regret at ${\approx}6\times$ less memory than the fixed-depth frontier.}
\label{fig:memory}
\end{figure}

\begin{figure}[t]
\centering
\figorpending{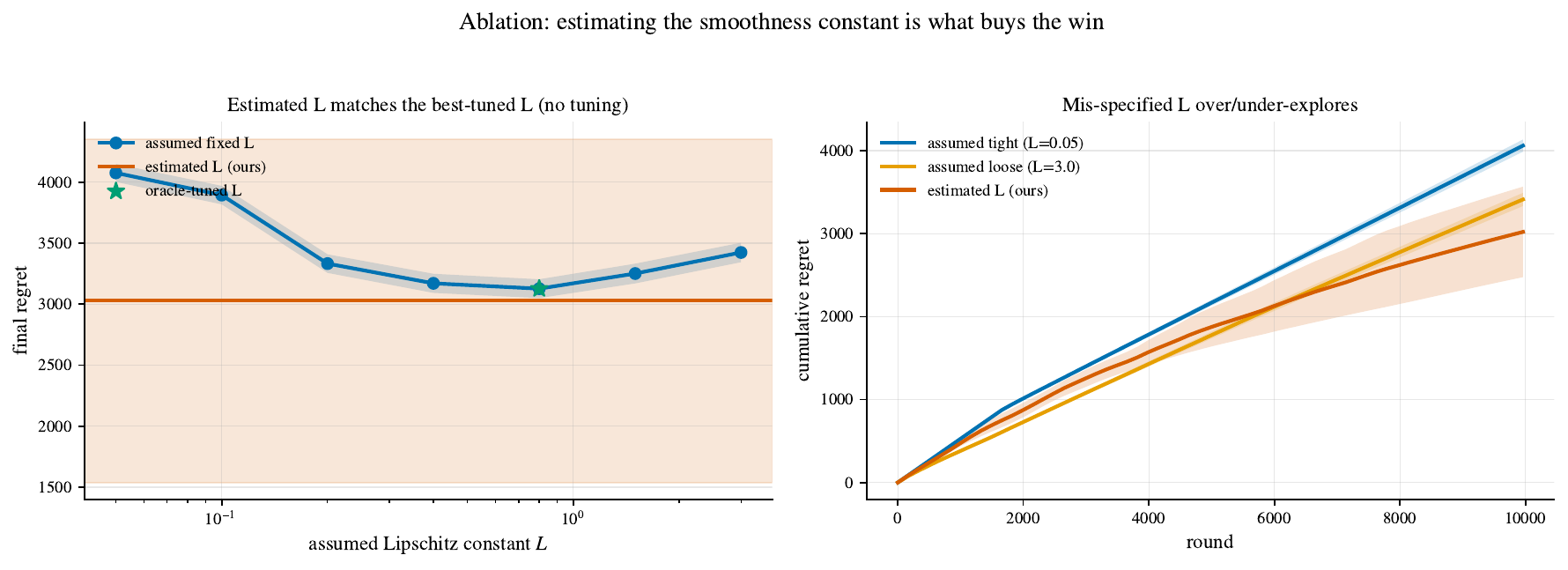}{examples/tree\_bandits/lipschitz\_ablation.py}
\caption{The Lipschitz-constant ablation on a heterogeneous-smoothness tree ($10$ seeds).
The assumed-$L$ tuning curve is U-shaped, while the estimated-$L$ policy (ours) matches the
oracle-tuned minimum with no tuning ($3029$ vs.\ $3127$ final regret).}
\label{fig:ablation}
\end{figure}

\begin{figure}[t]
\centering
\includegraphics[width=\linewidth]{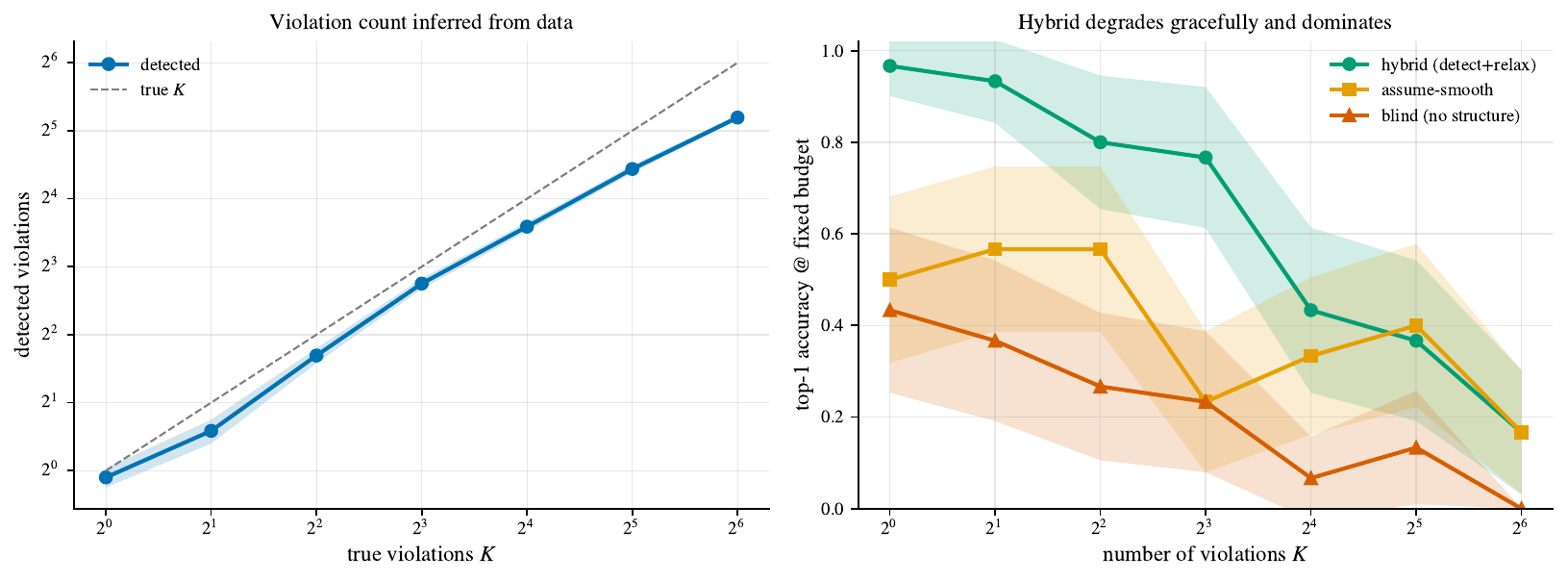}
\caption{Value of structure vs.\ number of Lipschitz violations $K$ ($30$ seeds, 95\% CI
bands). Left: the violation count is recovered from data. Right: top-$1$ accuracy at a fixed
budget, where the detect-and-relax hybrid degrades gracefully and dominates both baselines.}
\label{fig:violations}
\end{figure}

\begin{table}[t]
\centering
\small
\tableorpending{figures/lipschitz_ablation_table.tex}{examples/tree\_bandits/lipschitz\_ablation.py}
\caption{Lipschitz-constant ablation: final regret for mis-specified fixed $L$, the oracle-tuned
fixed $L$, and the data-driven estimator (ours).}
\label{tab:ablation}
\end{table}

\begin{figure}[t]
\centering
\figorpending[\linewidth]{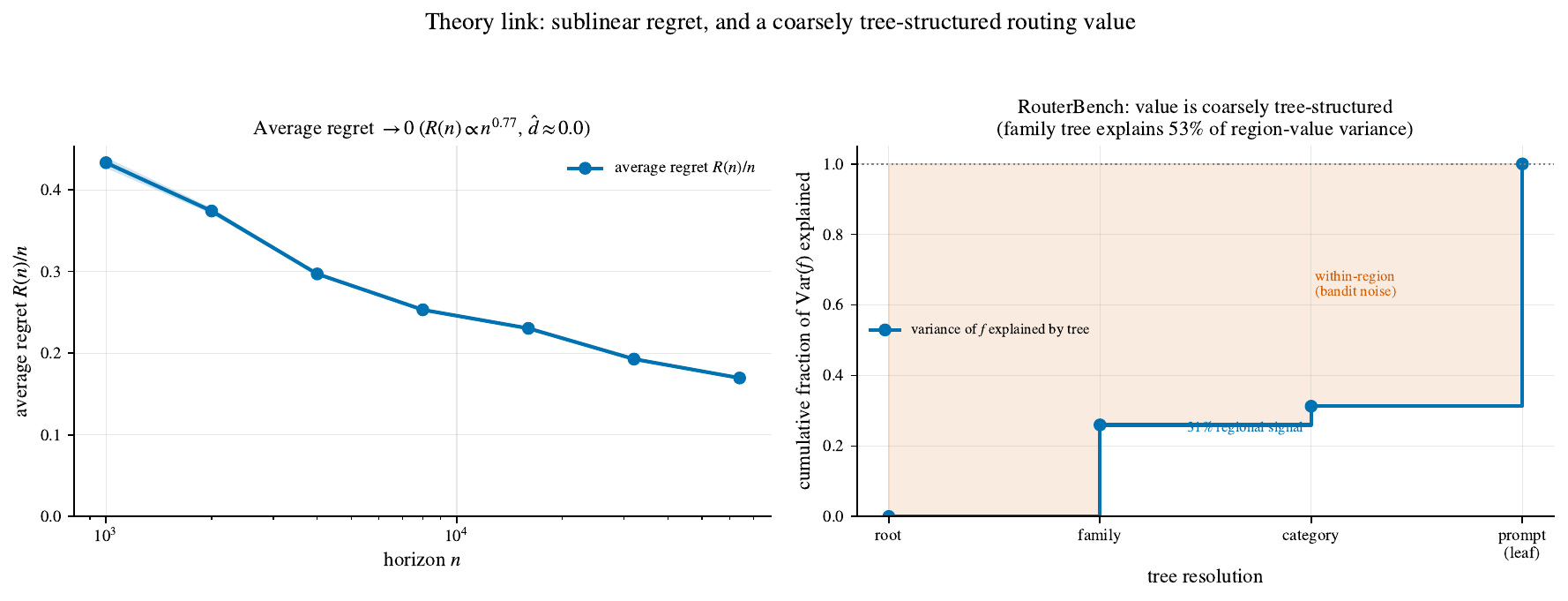}{examples/analysis/theory\_link.py}
\caption{Linking theory to data ($8$ seeds). Left: on a tree-Lipschitz value function the average
regret $R(n)/n$ decays toward zero ($R(n)\propto n^{\alpha}$, $\alpha\approx0.77<1$;
$\hat d\approx0$). Right: variance decomposition of the real RouterBench routing value
function. The region tree captures the coarse regional signal, and the within-region remainder
is per-prompt noise the bandit averages over.}
\label{fig:theorylink}
\end{figure}

\section{Additional Experimental Results}
\label{app:experiments}

This appendix collects the per-benchmark tables, secondary figures, and negative results
behind \autoref{sec:experiments}. 

\subsection{Routing}

\begin{figure}[t]
\centering
\begin{minipage}[t]{0.48\linewidth}
\centering
\includegraphics[width=\linewidth]{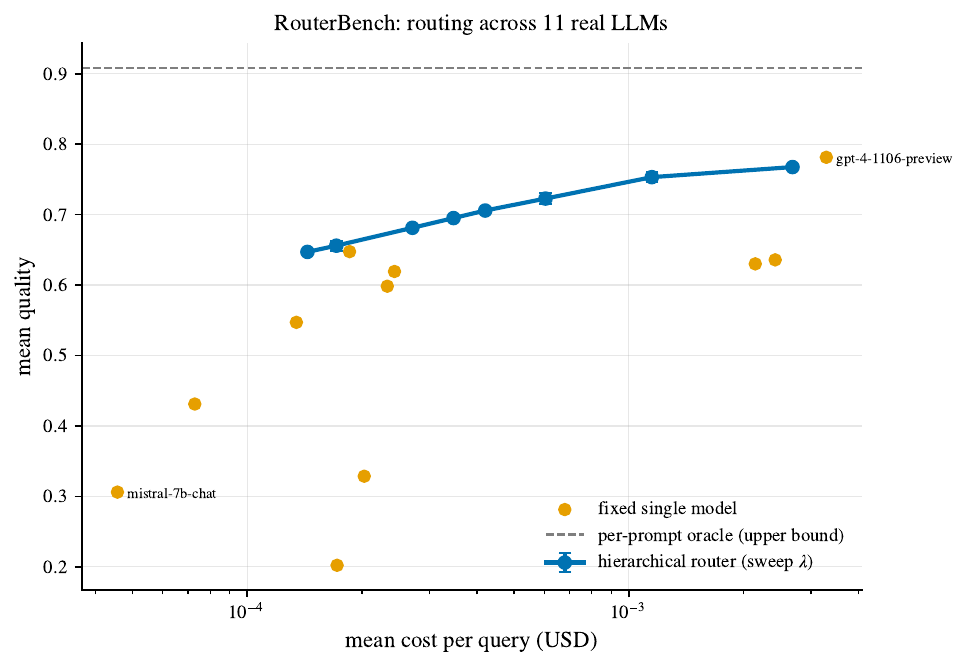}
\end{minipage}\hfill
\begin{minipage}[t]{0.48\linewidth}
\centering
\figorpending{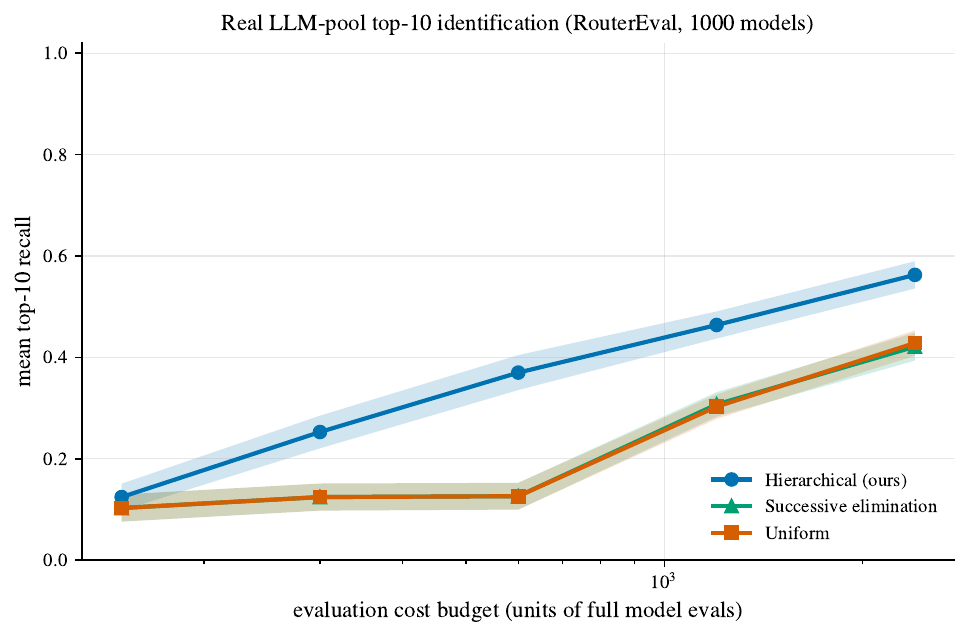}{examples/tree\_bandits/real\_topk\_identification.py}
\end{minipage}
\caption{Structure against structure-blind baselines on real model pools. Left: RouterBench
routing. Sweeping the cost weight $\lambda$ traces a router frontier (blue) that dominates
every fixed model (orange), and the dashed line is the per-prompt oracle. Right: top-$10$
recall vs.\ evaluation cost on RouterEval's $1000$-model pool ($9$ evals $\times$ $20$ seeds,
$95\%$ CI bands), where the tree bandit dominates both structure-blind baselines at every
budget.}
\label{fig:routerbench}\label{fig:realtopk}
\end{figure}

\begin{figure}[t]
\centering
\includegraphics[width=\linewidth]{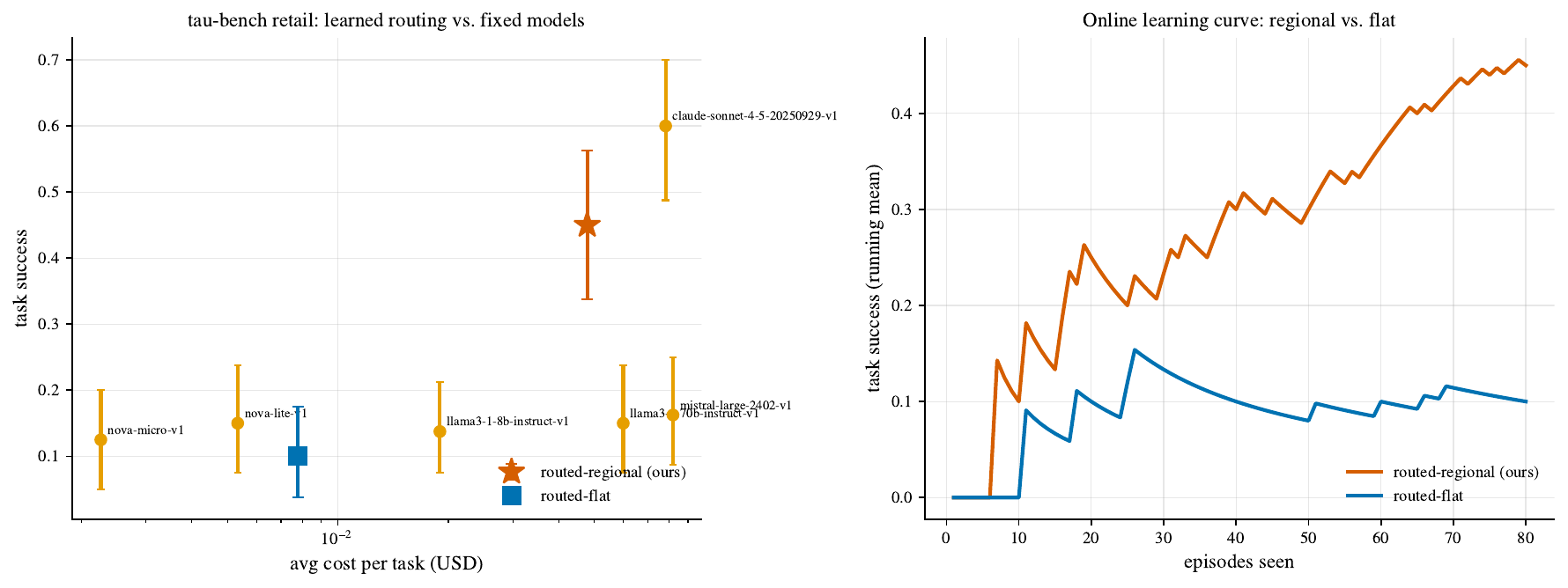}
\caption{Per-call routing inside long-horizon $\tau$-bench agents. Left: retail
cost/quality, where the learned regional router (red star) reaches the fixed-model cloud at
low cost. Right: the online curve separates the regional from the flat learner.}
\label{fig:taubench}
\end{figure}

\paragraph{Live models on MMLU.}
We route six Bedrock models spanning roughly a $100\times$ cost gradient, with MMLU
subjects as regions. The regional router reaches near-oracle quality at the lowest cost
of any realizable policy, and only the truth-based per-prompt oracle is cheaper. It beats
both the structure-blind flat learner and the strongest fixed model (\autoref{tab:mmlu}).

\paragraph{Per-call routing inside long-horizon agents.}
On $\tau$-bench the region is a small discrete difficulty-and-phase feature of the turn,
the arm is a model, and the quality signal is the episode's terminal reward, credited to
every region-model pull along the episode. A single trial's bootstrap interval captures
task-to-task spread but not seed variance, so we ran $5$ independent replicates of the
two learned routers over the same $80$ tasks (\autoref{tab:taureps}). The regional
learner beats the flat learner in $4$ of $5$ replicates, with mean success
$0.463 \pm 0.156$ against $0.323 \pm 0.213$ and gap $+0.14 \pm 0.27$. The variance is
dominated by which models each learner converges to under terminal-reward credit
assignment. In the one reversed replicate the flat learner drifted to expensive models
and the regional learner did not. We therefore present $\tau$-bench as directional
evidence and let RouterBench and live MMLU carry the routing claim. For completeness, the
strongest fixed policy, always claude-sonnet-4.5, attains higher raw success at $0.600$
than either learned router, at roughly $1.6\times$ the regional router's realized cost
per task (\autoref{tab:taubench}). The learned routers' advantage on this benchmark is
therefore cost-adjusted, not absolute. A per-turn value signal would be tighter and is
future work.

\begin{table}[!t]
\centering
\small
\textbf{\textsc{MMLU: live Bedrock routing}}\par\smallskip
\tableorpending{figures/mmlu_routing_table.tex}{examples/llm\_routing/mmlu\_routing.py}
\par\medskip
\textbf{\textsc{$\tau$-bench retail: per-call routing in long-horizon agents}}\par\smallskip
\emph{Per-policy results (main run)}\par\smallskip
\tableorpending{figures/taubench_routing_table.tex}{examples/agentic/taubench\_routing.py}
\par\medskip
\emph{Seed-variance study ($5$ independent replicates)}\par\smallskip
\tableorpending{figures/taubench_replicates_table.tex}{examples/agentic/combine\_taubench\_replicates.py}
\caption{Routing detail. Top: routing over six real Bedrock models on MMLU, with $8$
subjects as regions and a horizon of $6$k. Only the hierarchical and flat policies learn
online. The rest are computed from the ground truth and pay no exploration cost. Middle:
$\tau$-bench retail, the online contextual UCB router (regional) against the
structure-blind flat learner and the fixed models. Bottom: the seed-variance study, $5$
independent replicates of the two learned routers over the same $80$ tasks, of which
replicate $1$ is the main run above.}
\label{tab:mmlu}\label{tab:taubench}\label{tab:taureps}
\end{table}

\subsection{Top-$k$ identification}

\autoref{tab:realtopk} reports the full budget sweep behind the identification row of
\autoref{tab:summary} and \autoref{fig:realtopk}. The pool is RouterEval's $1000$
models arranged as a branching-$10$, depth-$3$ tree, the metric is mean recall@$10$ over
$20$ seeds and $9$ evaluation splits, and a probe costs $0.05$ of a leaf evaluation. The
charged pre-pass that measures the spread is included in our method's cost. Two patterns
in the sweep support the theory. First, successive elimination never separates from
uniform sampling, staying within one standard error of it at every budget, so the win
comes from the tree structure rather than from an elimination schedule. Second, the
hierarchical gain is largest at low-to-mid budgets, $2.9\times$ recall at $B{=}600$, and
narrows to $1.3\times$ at $B{=}2400$ as the structure-blind baselines begin to catch up.

\begin{table}[t]
\centering
\footnotesize
\setlength{\tabcolsep}{4pt}
\begin{tabular}{lccccc}
\toprule
Cost budget & 150 & 300 & 600 & 1200 & 2400 \\
\midrule
\textbf{Hierarchical (ours)} & \textbf{0.124\,$\pm$\,0.013} & \textbf{0.253\,$\pm$\,0.016} & \textbf{0.370\,$\pm$\,0.018} & \textbf{0.464\,$\pm$\,0.014} & \textbf{0.563\,$\pm$\,0.014} \\
Successive elimination & 0.103\,$\pm$\,0.014 & 0.124\,$\pm$\,0.014 & 0.126\,$\pm$\,0.013 & 0.307\,$\pm$\,0.012 & 0.421\,$\pm$\,0.014 \\
Uniform & 0.103\,$\pm$\,0.014 & 0.124\,$\pm$\,0.014 & 0.126\,$\pm$\,0.013 & 0.303\,$\pm$\,0.012 & 0.428\,$\pm$\,0.013 \\
\bottomrule
\end{tabular}
\caption{Top-$10$ identification on the RouterEval $1000$-model pool: mean recall@$10$
($\pm$ SEM over $20$ seeds and $9$ evaluation splits) as the cost budget grows. The
probe/leaf cost ratio is $0.05$, and the hierarchical method's cost includes the
certificate pre-pass.}
\label{tab:realtopk}
\end{table}

\subsection{Test-time search}

This subsection collects the accuracy-vs-budget curves, per-benchmark tables, the capability
ladder, and the tree-Lipschitz instrumentation behind the search and measured-prior
experiments.

\begin{figure}[t]
\centering
\begin{minipage}{0.49\linewidth}
\centering
\figorpending{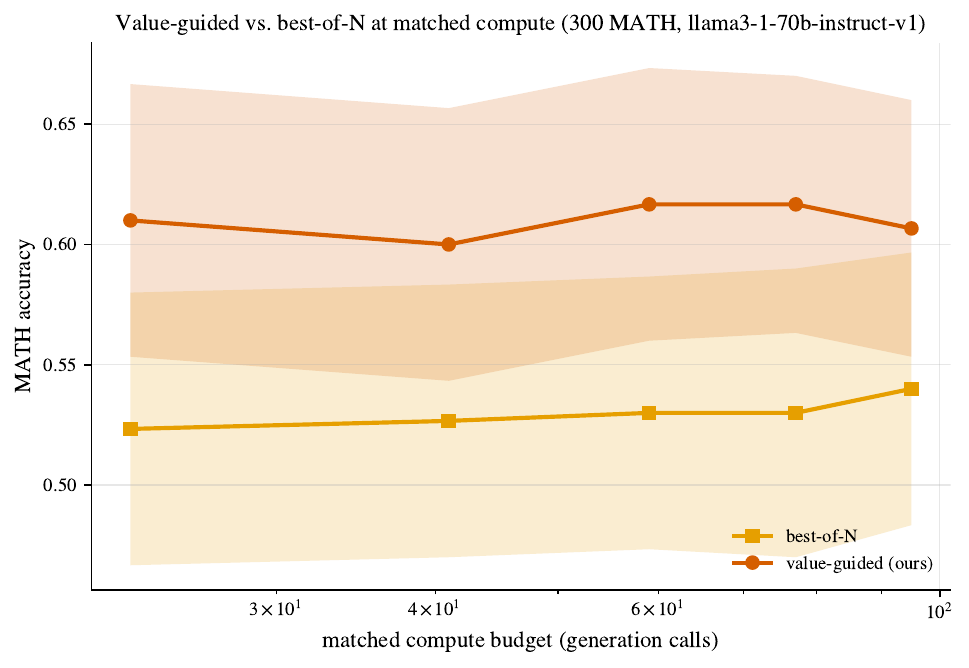}{examples/reasoning/reasoning\_search.py}
\end{minipage}\hfill
\begin{minipage}{0.49\linewidth}
\centering
\figorpending{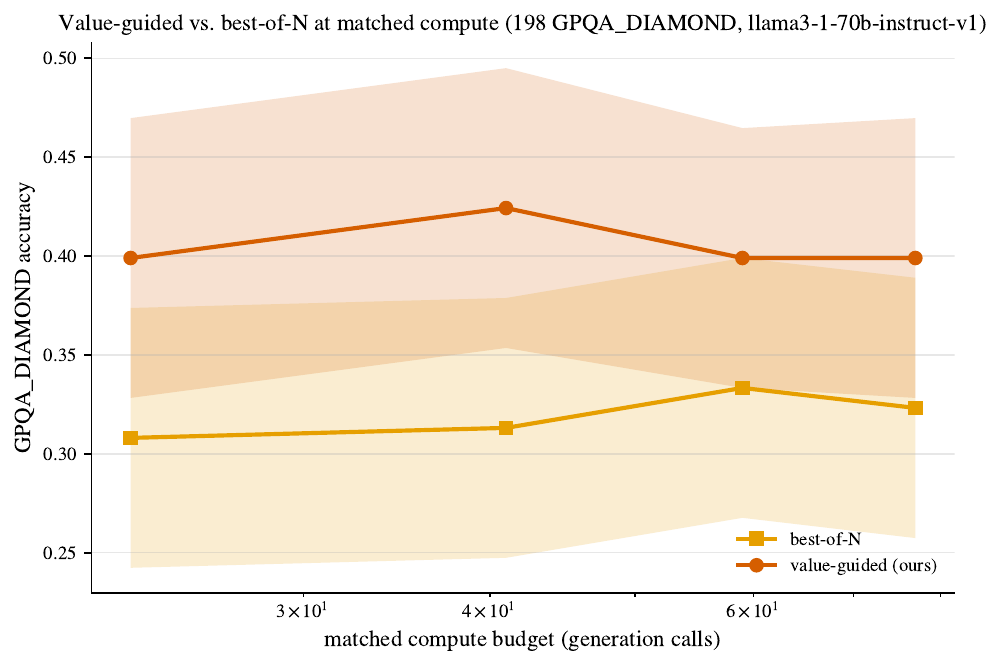}{examples/reasoning/reasoning\_search.py}
\end{minipage}
\caption{Accuracy vs.\ matched compute budget (generation calls) for value-guided search vs.\
best-of-N, with bootstrap 95\% CI bands (llama-3.1-70B). Left: MATH. Right: GPQA-Diamond.}
\label{fig:reasoning}\label{fig:reasoninggpqa}
\end{figure}

\begin{table}[t]
\centering
\small
\resizebox{\linewidth}{!}{\tableorpending{figures/reasoning_search_swebench_table.tex}{examples/reasoning/swebench\_search.py}}
\caption{Repository-level code: value-guided patch search vs.\ best-of-N at matched compute on
SWE-bench Verified ($261$ issues, ``$15$ min--$1$ hour'' tier, claude-sonnet-4.5). Resolved rate is
the official criterion, and $\Delta$ is the paired same-issue gap with a $95\%$ bootstrap CI
over issues.}
\label{tab:swebench}
\end{table}

\begin{table}[p]
\centering
\footnotesize
\textbf{\textsc{Capability ladder}}\par\smallskip
\emph{MATH}\par\smallskip
\tableorpending{figures/reasoning_models_math_table.tex}{examples/reasoning/combine\_reasoning\_models.py}
\par\medskip
\emph{GPQA-Diamond}\par\smallskip
\tableorpending{figures/reasoning_models_gpqa_diamond_table.tex}{examples/reasoning/combine\_reasoning\_models.py}
\par\medskip
\emph{GSM8K (near-saturated control)}\par\smallskip
\tableorpending{figures/reasoning_models_gsm8k_table.tex}{examples/reasoning/combine\_reasoning\_models.py}
\par\medskip
\textbf{\textsc{Repository-level code}}\par\smallskip
\emph{SWE-bench Verified, model $\times$ budget sweep}\par\smallskip
\tableorpending{figures/swebench_sweep_table.tex}{examples/reasoning/combine\_swebench\_sweep.py}
\par\medskip
\textbf{\textsc{Code synthesis (pre-registered nulls)}}\par\smallskip
\emph{HumanEval}\par\smallskip
\tableorpending{figures/reasoning_search_humaneval_table.tex}{examples/reasoning/reasoning\_search.py}
\par\medskip
\emph{MBPP}\par\smallskip
\tableorpending{figures/reasoning_search_mbpp_table.tex}{examples/reasoning/reasoning\_search.py}
\caption{Test-time search, value-guided against best-of-N at matched compute, with $95\%$
bootstrap confidence intervals, $\Delta$ the paired same-item gap, and bold where the
interval excludes zero. Capability ladder: five models on MATH and GPQA-Diamond, with
GSM8K as the near-saturated control. The gain is largest for capable but unsaturated
models and vanishes or reverses for the weakest, which is the unreachable regime.
Mistral-Large's low absolute MATH accuracy reflects answer-format non-compliance rather
than reasoning ability. Repository-level code: the SWE-bench model $\times$ budget sweep
over $78$--$80$ Verified ``$15$ min--$1$ hour'' issues per cell, with instances whose
evaluation harness errored dropped from both arms. Code synthesis: the HumanEval and MBPP
nulls for Llama-3.1-70B. The public-test cheap probe is weak and both benchmarks are
near-saturated, a limit fixed before the results were seen.}
\label{tab:reasoningmodels}\label{tab:reasoninghumaneval}\label{tab:reasoningmbpp}\label{tab:swesweep}
\end{table}

\paragraph{Pre-registered hard-tier run.}
A pre-registered follow-up tested whether the relative gain widens on the hardest tier.
The protocol was committed to the repository before the run, and the setup covers all
$42$ ``$1$--$4$ hours'' SWE-bench Verified issues with claude-sonnet-4.5 at matched
budgets $6$ and $9$, with machinery identical to the main sweep. The directional
hypothesis failed, and both arms fall to near floor. At $B{=}6$ each arm resolves $2/42$,
with $\Delta=0.000$ $[-0.07,+0.07]$. At $B{=}9$ the arms resolve $2/42$ and $3/42$, with
$\Delta=+0.024$ $[-0.05,+0.10]$, and we do not report ratios, per the pre-registered
instability rule. This run measures the unreachable regime of the scope paragraph
directly on code. When issues are beyond the model's reach at the given budget, guidance
has nothing to steer, exactly as the reachable-headroom condition predicts.

\begin{table}[t]
\centering
\small
\begin{minipage}{\linewidth}
\centering
\textbf{MATH}\par\smallskip
\tableorpending{figures/reasoning_tree_lipschitz_math_table.tex}{examples/analysis/reasoning\_tree\_lipschitz.py}
\par\bigskip
\textbf{SWE-bench Verified}\par\smallskip
\tableorpending{figures/swebench_tree_lipschitz_table.tex}{examples/analysis/swebench\_tree\_lipschitz.py}
\end{minipage}
\caption{The value function as almost tree-$K$-Lipschitz, measured in the wild:
cheap-vs-true node-value correlation (the tree-Lipschitz backbone) and the count of pivotal
steps $K$ (violations). Top: MATH partial traces (cheap self-consistency probe). Bottom: the
real-code analog on SWE-bench Verified ($60$ issues and $540$ candidate patches, cheap
FAIL\_TO\_PASS test probe vs.\ the official resolved criterion). The empirical
counterpart of the synthetic violation-family task (Appendix~\ref{app:synthetic}).}
\label{tab:reasoningtree}\label{tab:swetree}
\end{table}

\subsection{Prefix caching and systems}

This subsection details the serving experiments. The prompt-stream and Mooncake trace
tables come first, then the live-server vLLM measurements, which cover caching on and
off, the KV-budget frontier, and the GPU-calibrated eviction-policy comparison.

\begin{figure}[p]
\centering
\figorpending[\linewidth]{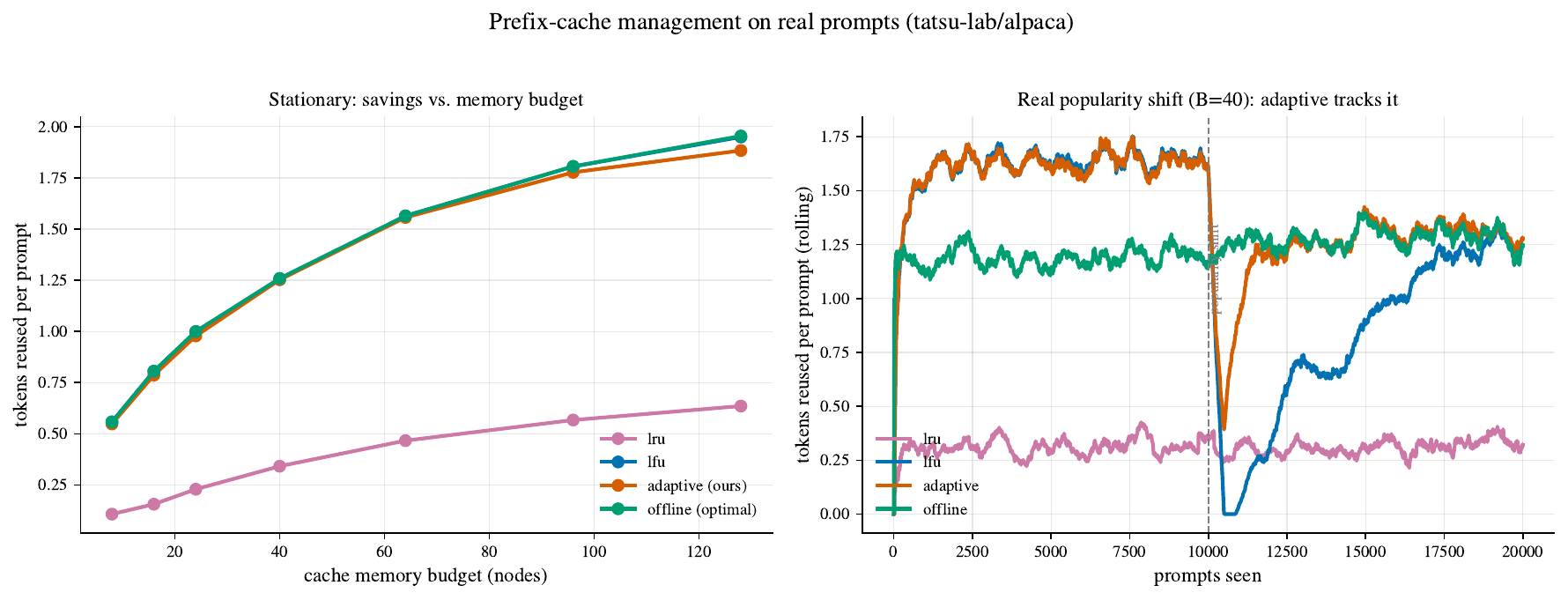}{examples/llm\_routing/prefix\_cache.py}
\caption{Prefix caching on a real prompt stream. Left: savings vs.\ memory budget
(stationary), where adaptive tracks LFU and the offline optimum. Right: savings over time
across a real popularity shift, where adaptive re-adapts fastest, LFU recovers by the end of the
stream ($1.25$ vs.\ adaptive's $1.28$ tokens/prompt at the final point), and LRU stays
stale ($0.32$).}
\label{fig:cache}
\par\medskip
\begin{minipage}[t]{0.48\linewidth}
\centering
\figorpending{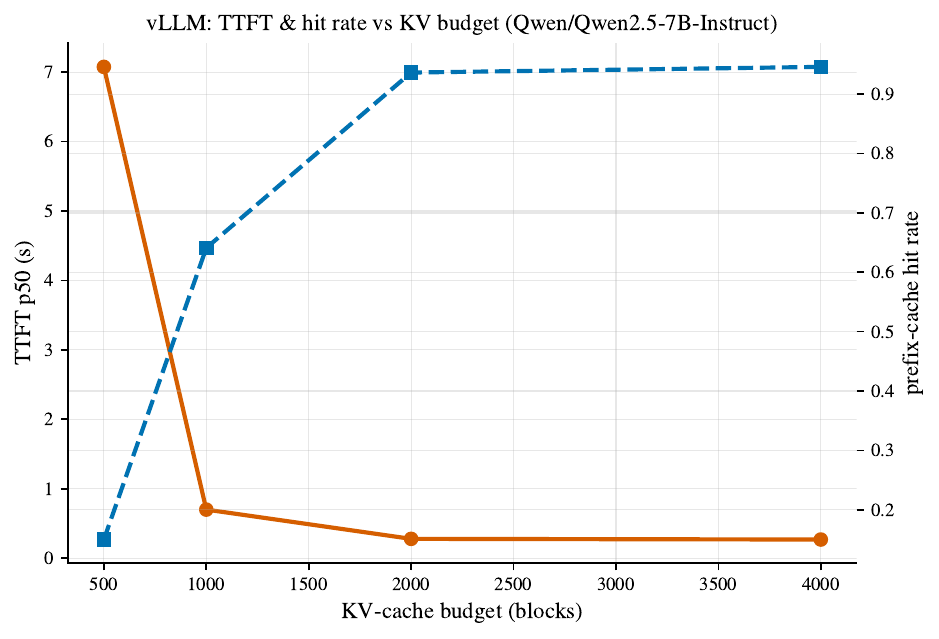}{examples/systems/vllm\_prefix\_cache\_eval.py}
\end{minipage}\hfill
\begin{minipage}[t]{0.48\linewidth}
\centering
\figorpending{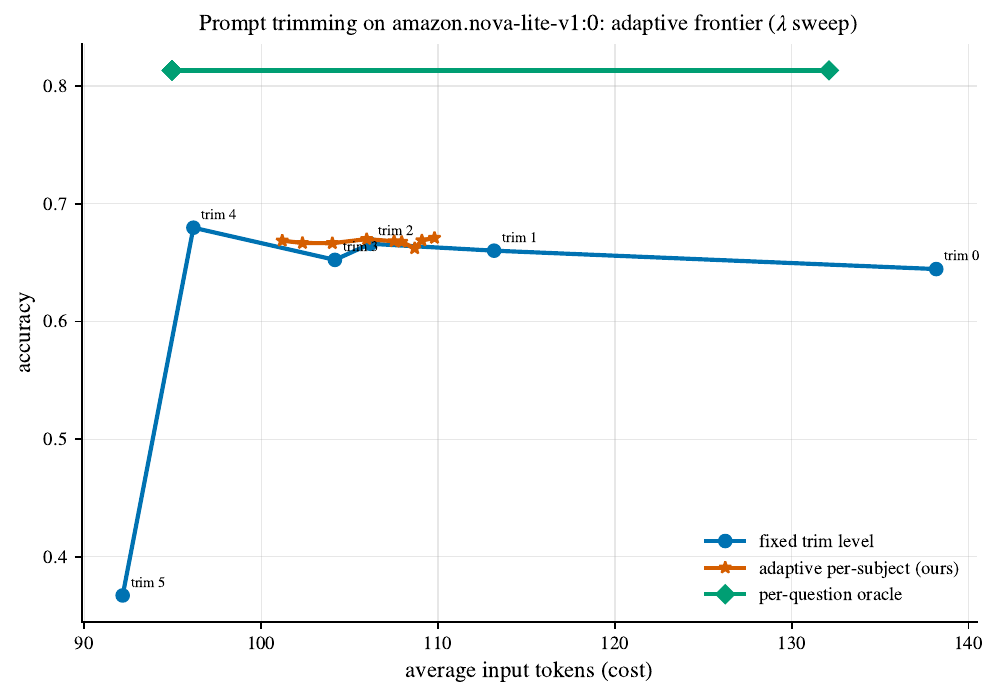}{examples/llm\_routing/prompt\_optimization.py}
\end{minipage}
\caption{Serving knobs on real systems. Left: the KV-cache budget frontier on live vLLM,
where median TTFT (log scale) falls and prefix-cache hit rate rises as the KV budget grows,
the hardware analog of savings vs.\ memory budget. Right: prompt trimming, accuracy vs.\
average input tokens. The fixed trim levels (blue) trace a tradeoff curve, sweeping $\lambda$
gives the adaptive per-subject frontier (red), which dominates the naive always-verbose
default but sits near the best fixed trim at this scale, and the per-question oracle (green)
is an upper bound.}
\label{fig:vllmbudget}\label{fig:trim}
\end{figure}

\begin{table}[p]
\centering
\footnotesize
\textbf{\textsc{Mooncake production trace}}\par\smallskip
\tableorpending{figures/prefix_cache_mooncake_toolagent_table.tex}{examples/llm\_routing/prefix\_cache.py}
\par\medskip
\textbf{\textsc{Live vLLM: prefix caching ON vs.\ OFF}}\par\smallskip
\tableorpending{figures/vllm_prefix_cache_table.tex}{examples/systems/vllm\_prefix\_cache\_eval.py}
\par\medskip
\textbf{\textsc{GPU-calibrated eviction policies}}\par\smallskip
\tableorpending{figures/vllm_policy_table.tex}{examples/systems/vllm\_policy\_eval.py}
\caption{Caching and systems detail. Mooncake: prefix-cache management on the real
tool-agent production trace of $20$k requests, measured in blocks reused per request.
Adaptive matches the offline optimum and beats the engine-default LRU, most at tight
budgets. Live vLLM: automatic prefix caching on against off, reporting TTFT, latency,
throughput, and prefix-cache hit rate on a shared-prefix workload with the built-in LRU
eviction. Eviction policies: realized TTFT saved in milliseconds on the GPU-calibrated
shared-prefix workload, stationary and post-shift. Adaptive beats the engine-default LRU
throughout and dominates after the shift.}
\label{tab:mooncake}\label{tab:vllm}\label{tab:vllmpolicy}
\par\smallskip
\textbf{\textsc{LongBench: context trimming}}\par\smallskip
\emph{Positional truncation}\par\smallskip
\tableorpending{figures/longbench_trim_table.tex}{examples/llm\_routing/longbench\_trim.py}
\par\medskip
\emph{Retrieval-scored chunk selection (pre-registered follow-up)}\par\smallskip
\tableorpending{figures/longbench_trim_retrieval_table.tex}{examples/llm\_routing/longbench\_trim.py}
\par\medskip
\textbf{\textsc{BBH: few-shot trimming}}\par\smallskip
\tableorpending{figures/bbh_trim_table.tex}{examples/llm\_routing/bbh\_trim.py}
\caption{Prompt trimming on real benchmarks: adaptive per-task selection vs.\ fixed trims and
the per-item oracle. LongBench (QA-F1 vs.\ input tokens): the top block trims by
positional truncation. The middle block is the pre-registered follow-up with the identical setup but retrieval-scored
chunk selection at the same word budgets. The better primitive lifts every fixed trim at
matched tokens ($+0.10$ to $+0.13$ F1 at tight budgets) and moves the adaptive operating
point to $0.407$ F1 at $2{,}701$ tokens. BBH exact-match vs.\ input tokens shows the adaptive
per-task few-shot selection.}
\label{tab:longbenchtrim}\label{tab:longbenchretrieval}\label{tab:bbhtrim}
\end{table}

\subsection{Prompt trimming}

Trimming spans three regimes: little headroom on MMLU, a real quality/token tradeoff on
LongBench, and strict dominance on BBH.

\clearpage
\section{Future Work}
\label{sec:future}

The structure our method exploits is not specific to trees. Two ingredients drive everything
in \autoref{sec:theory}: a metric under which the objective is smooth, and an aggregation map
whose value on a region is a (weighted) average of the points it contains. The tree supplies
the LCA distance together with the subtree average, but
the optimistic engine, the data-driven smoothness certificate, and the discontinuity-guided
sampling all carry over to any index set equipped with these two ingredients.

\emph{General metric spaces.} Replacing the LCA distance by an arbitrary metric and the dyadic
tree by a hierarchical cover, as in zooming and $\mathcal{X}$-armed bandits over metric
spaces, preserves the near-optimality-dimension analysis. The certificate of
\autoref{sec:smoothness} bounds the within-cell maximum-minus-mean from samples regardless of
how the cells are formed, so it transfers unchanged.

\emph{Spheres and directional embeddings.} The most relevant high-dimensional instance for the
language-model applications is the unit sphere under the angular (cosine) metric, since prompt
and model embeddings live there. Cells become spherical caps, the aggregation map averages
over a cap, and tree-Lipschitzness becomes Lipschitzness in geodesic distance. This would
replace the token-prefix tree of \autoref{sec:setup}, which is sparse and brittle for routing,
with an embedding-induced hierarchy on the sphere, and connects the bandit view developed here
to angular reconstruction of a reward defined over the sphere.

\emph{DAGs and weighted aggregation.} When a region's value is any linear functional of its
underlying points (a weighted subset average rather than a uniform one), the noise-deconvolved
moment-generating certificate is unchanged, so the framework extends from trees to directed
acyclic and lattice-structured index sets.

\emph{Certified sparse attention for long-context RL.} A further application targets a failure
mode recently identified in reinforcement learning of long-context language
models~\citep{sparseattnrl2026}: applying static or heuristic sparse attention during on-policy
rollouts injects an approximation error that biases the token likelihoods, and this error
compounds over the autoregressive trajectory into an actor--policy distribution mismatch large
enough to inflate the PPO/GRPO importance ratio and collapse training. The pathology is not
sparsity per se but \emph{uncontrolled, inconsistent per-step error}. Our framework suggests a
route to controlling it: cast the choice of which KV-cache blocks to attend to at each step as a
hierarchical bandit over the context tree (leaves are tokens, internal nodes are block
summaries), whose arm value is a block's contribution to the attention output and whose reward is
the noisy downstream return. Because the value is now measured through a stochastic reward rather
than computed exactly, the full optimistic apparatus of \autoref{sec:algorithm}--\ref{sec:theory}
applies, not merely the multi-fidelity certificate. Two smoothness axes structure the problem:
the \emph{spatial} tree-Lipschitzness over the context hierarchy studied here --- with the $K$
violations being exactly the isolated ``needle'' blocks that produce error spikes --- and a
\emph{temporal} assumption that block relevance drifts smoothly across generation steps, which
would bound how much re-exploration each step requires. The central open result is a bridge
theorem: a certified per-step bound on the selection error (of the form $L\rho^{\ell}+K\cdot D$
supplied by \autoref{sec:smoothness}) implies a bounded total-variation gap between the sparse and
dense policies, hence a clip-safe importance ratio and provably stable training --- a
mathematically grounded alternative to the empirical distillation used to patch this instability
today. We leave the temporal-smoothness lemma and the importance-ratio bound to future work.

\end{document}